\def\eqref#1{equation~\ref{#1}}
\def\1{\bm{1}}
\DeclareMathAlphabet{\mathsfit}{\encodingdefault}{\sfdefault}{m}{sl}
\SetMathAlphabet{\mathsfit}{bold}{\encodingdefault}{\sfdefault}{bx}{n}
\newcommand{\R}{\mathbb{R}}
\title{From Probability to Counterfactuals: the Increasing Complexity of Satisfiability in Pearl's Causal Hierarchy}
\author{
Julian D\"{o}rfler \thanks{Contributing equally first authors.}\\
Saarland University, Germany\\
\texttt{jdoerfler@cs.uni-saarland.de} \\
\And
Benito van der Zander \thanks{Contributing equally first authors.}\\
University of Lübeck, Germany\\
\texttt{b.vanderzander@uni-luebeck.de} \\
\And
Markus Bl\"{a}ser \thanks{Contributing equally last authors.}\\
Saarland University, Germany\\
\texttt{mblaeser@cs.uni-saarland.de} \\
\And
Maciej Li\'{s}kiewicz \thanks{Contributing equally last authors.}\\
University of Lübeck, Germany\\
\texttt{maciej.liskiewicz@uni-luebeck.de}
}
\newtheorem{theorem}{Theorem}
\newtheorem{lemma}[theorem]{Lemma}
\newtheorem{proposition}[theorem]{Proposition}
\newtheorem{fact}[theorem]{Fact}
\newtheorem{remark}[theorem]{Remark}
\definecolor{earthyellow}{rgb}{0.88, 0.66, 0.37}
\definecolor{bostonuniversityred}{rgb}{0.8, 0.0, 0.0}
\definecolor{persianblue}{rgb}{0.11, 0.22, 0.73}
\newcommand{\PLang}[2][]{%
\def\bannachempty{#1}%
\ifx\empty\bannachempty\text{p-}\else\text{p$_{#1}$-}\fi%
\textsc{#2}%
}
\newcommand{\Q}{\mathbb{Q}}
\newcommand{\IR}{\mathbb{R}}
\newcommand{\DeclareMathActive}[2]{%
  \expandafter\edef\csname keep@#1@code\endcsname{\mathchar\the\mathcode`#1 }
  \begingroup\lccode`~=`#1\relax
  \lowercase{\endgroup\def~}{#2}%
  \AtBeginDocument{\mathcode`#1="8000 }%
}
\newcommand{\std}[1]{\csname keep@#1@code\endcsname}
\patchcmd{\newmcodes@}{\mathcode`\-\relax}{\std@minuscode\relax}{}{\ddt}
\let\IsInPP=\relax
\newcommand{\compactEquals}[1]{\let\IsInPP=1#1\let\IsInPP=\relax}
\newcommand{\PP}[1]{\mathbb{P}(\compactEquals{#1})}
\newcommand{\pp}[1]{{P}(\compactEquals{#1})}
\newsavebox{\neqbox}
\savebox{\neqbox}{$\neq$}
\def\neqinPP{\mathrel{\usebox{\neqbox}}} 
\newcommand{\existsR}{\ensuremath{\mathsf{\exists\IR}}}
\newcommand{\ETR}{\ensuremath{\textsc{ETR}}}
\newcommand{\succETR}{\ensuremath{\textup{succ}\text{-}\textsc{ETR}}}
\newcommand{\succR}{\ensuremath{\mathsf{succ\text{-}\exists\IR}}}
\newcommand{\NP}{\ensuremath{\mathsf{NP}}}
\newcommand{\NEXP}{\ensuremath{\mathsf{NEXP}}}
\newcommand{\realNP}{\ensuremath{\mathsf{NP}_{\text{real}}}}
\newcommand{\realNEXP}{\ensuremath{\mathsf{NEXP}_{\text{real}}}}
\newcommand{\PSPACE}{\ensuremath{\mathsf{PSPACE}}}
\newcommand{\EXPSPACE}{\ensuremath{\mathsf{EXPSPACE}}}
\newcommand{\ccPP}{\ensuremath{\mathsf{PP}}}
\newcommand{\leqp}{\leq_{\text{P}}}
\newcommand{\maxvaluecount}{c}
\newcommand{\cL}{{\mathcal L}}
\newcommand{\cE}{{\mathcal E}}
\newcommand{\cF}{{\mathcal F}}
\newcommand{\cT}{{\mathcal T}}
\newcommand{\fM}{{\mathfrak M}}
\newcommand{\bQ}{{\bf Q}}
\newcommand{\bU}{{\bf U}}
\newcommand{\bV}{{\bf V}}
\newcommand{\bW}{{\bf W}}
\newcommand{\bY}{{\bf Y}}
\newcommand{\bX}{{\bf X}}
\newcommand{\cB}{{\cal B}}
\newcommand{\cC}{{\cal C}}
\newcommand{\cG}{{\cal G}}
\newcommand{\bc}{{\bf c}}
\newcommand{\be}{{\bf e}}
\newcommand{\bff}{{\bf f}}
\newcommand{\bi}{{\bf i}}
\newcommand{\bp}{{\bf p}}
\newcommand{\bq}{{\bf q}}
\newcommand{\bt}{{\bf t}}
\newcommand{\bu}{{\bf u}}
\newcommand{\bv}{{\bf v}}
\newcommand{\bw}{{\bf w}}
\newcommand{\by}{{\bf y}}
\newcommand{\bx}{{\bf x}}
\newcommand{\pa}{{\bf pa}}
\newcommand{\Pa}{{\bf Pa}}
\def\setofinterventions{{\boldsymbol{\alpha}}}
\def\Eprop{\cE_{\textit{prop}}}
\def\Eint{\cE_{\textit{int}}}
\def\Epint{\cE_{\textit{post-int}}}
\def\Ecounter{\cE_{\textit{counterfact}}}
\def\probname#1#2{^{\text{#1}}_{\text{#2}}}
\def\probsumname#1#2{^{\text{#1}{\langle{\scriptscriptstyle\Sigma}\rangle}}_{\text{#2}}}
\def\probnamesum#1#2{^{\text{#1}{\langle{\scriptscriptstyle\Sigma}\rangle}}_{\text{#2}}}
\def\Tprobpoly{\cT\probname{\it poly}{1}}
\def\Tcausalpolysum{\cT\probnamesum{\it poly}{3}}
\def\Tipolysum{\cT\probnamesum{\it poly}{$i$}}
\def\Tilinsum{\cT\probsumname{\it lin}{$i$}}
\def\Ticompsum{\cT\probsumname{\it base}{$i$}}
\def\Tipoly{\cT\probname{\it poly}{$i$}}
\def\Tilin{\cT\probname{\it lin}{$i$}}
\def\Ticomp{\cT\probname{\it base}{$i$}}
\def\Lipoly{\cL\probname{\it poly}{$i$}}
\def\Lilin{\cL\probname{\it lin}{$i$}}
\def\Licomp{\cL\probname{\it base}{$i$}}
\def\Lcausalstar{\cL^{*}_{3}}
\def\Listar{\cL^{*}_{i}}
\def\Lprobcomp{\cL\probname{\it base}{1}}
\def\Linterventcomp{\cL\probname{\it base}{2}}
\def\Lcausalpoly{\cL\probname{\it poly}{3}}
\def\Lipoly{\cL\probname{\it poly}{$i$}}
\def\Lilin{\cL\probname{\it lin}{$i$}}
\def\Licomp{\cL\probname{\it base}{$i$}}
\def\Lab{\textit{Lab}}
\newcommand{\SATistar}{\mbox{\sc Sat}^{*}_{\cL_i}}
\newcommand{\SATprobcomp}{\mbox{\sc Sat}^{\textit{base}}_{{\cL_1}}}
\newcommand{\SATicomp}{\mbox{\sc Sat}^{\textit{base}}_{{\cL_i}}}
\newcommand{\SATproblin}{\mbox{\sc Sat}^{\textit{lin}}_{{\cL_1}}}
\newcommand{\SATcausallin}{\mbox{\sc Sat}^{\textit{lin}}_{{\cL_3}}}
\newcommand{\SATilin}{\mbox{\sc Sat}^{\textit{lin}}_{{\cL_i}}}
\newcommand{\SATcausalpoly}{\mbox{\sc Sat}^{\textit{poly}}_{{\cL_3}}}
\def\false{\textsc{false}}
\def\true{\textsc{true}}
\newcommand{\SATprobcompsum}{\mbox{\sc Sat}\probsumname{\it base}{${\cal L}_1$}}
\newcommand{\SATproblinsum}{\mbox{\sc Sat}\probsumname{\it lin}{${\cal L}_1$}}
\newcommand{\SATprobpolysum}{\mbox{\sc Sat}\probsumname{\it poly}{${\cal L}_1$}}
\newcommand{\SATinterventcompsum}{\mbox{\sc Sat}\probsumname{\it base}{${\cal L}_2$}}
\newcommand{\SATinterventlinsum}{\mbox{\sc Sat}\probsumname{\it lin}{${\cal L}_2$}}
\newcommand{\SATinterventpolysum}{\mbox{\sc Sat}\probsumname{\it poly}{${\cal L}_2$}}
\newcommand{\SATcausalcompsum}{\mbox{\sc Sat}\probsumname{\it base}{${\cal L}_3$}}
\newcommand{\SATcausallinsum}{\mbox{\sc Sat}\probsumname{\it lin}{${\cal L}_3$}}
\newcommand{\SATcausalpolysum}{\mbox{\sc Sat}\probsumname{\it poly}{${\cal L}_3$}}
\newcommand{\SATicompsum}{\mbox{\sc Sat}\probsumname{\it base}{${\cal L}_i$}}
\newcommand{\SATilinsum}{\mbox{\sc Sat}\probsumname{\it lin}{${\cal L}_i$}}
\newcommand\myldots{\!\makebox[1em][c]{.\hfil.\hfil.}}  
\def\mmid{ | }
\def\dop{\textit{do}}
\newcommand*{\indep}{%
  \mathbin{%
    \mathpalette{\@indep}{}%
  }%
}
\newcommand*{\nindep}{%
  \mathbin{
    \mathpalette{\@indep}{/}%
  }%
}
\newcommand*{\@indep}[2]{%
  \sbox0{$#1\perp\m@th$}
  \sbox2{$#1=$}
  \sbox4{$#1\vcenter{}$}
  \rlap{\copy0}
  \dimen@=\dimexpr\ht2-\ht4-.2pt\relax
  \kern\dimen@
  \ifx\\#2\\%
  \else
    \hbox to \wd2{\hss$#1#2\m@th$\hss}%
    \kern-\wd2 %
  \fi
  \kern\dimen@
  \copy0 
}
\newenvironment{proofidea}{\noindent\emph{Proof idea: }}{\qed}
\begin{document}

\maketitle

%
%
%
%

%
%
%

\begin{abstract}
The framework of Pearl's Causal Hierarchy (PCH) formalizes three types of reasoning: probabilistic (i.e.~purely observational), interventional, and counterfactual, that reflect the progressive sophistication of human thought regarding causation. We investigate the computational complexity aspects of reasoning in this framework focusing mainly on satisfiability problems expressed in probabilistic and causal languages across the PCH. That is, given a system of formulas in the standard probabilistic and causal languages, does there exist a model satisfying the formulas?  

Our main contribution is to prove the exact computational complexities showing that languages allowing addition and marginalization (via the summation operator) yield $\NP^{\ccPP}$-, \PSPACE-, and \NEXP-complete satisfiability problems, depending on the level of the PCH. These are the first results to demonstrate a strictly increasing complexity across the PCH: from probabilistic to causal and counterfactual reasoning. On the other hand, in the case of full languages, i.e.~allowing addition, marginalization, and multiplication, we show that the satisfiability for the counterfactual level remains the same as for the probabilistic and causal levels, solving an open problem in the field.
\end{abstract}

\setcounter{footnote}{0}

\section{Introduction}\label{sec:intro}
The development of the modern causal theory in AI and empirical sciences has greatly benefited from an influential 
structured approach to inference about causal phenomena, which is based on a reasoning 
hierarchy named ``Ladder of Causation'', also often referred to as the ``Pearl’s Causal Hierarchy'' 
(PCH) (\citep{shpitser2008complete,Pearl2009,bareinboim2022pearl}, see also  \citep{pearl2018book}
for a gentle introduction to the topic). This three-level framework formalizes various types of reasoning 
that reflect the progressive sophistication of human thought regarding causation. It arises from a 
collection of causal mechanisms that model the ``ground truth'' of \emph{unobserved} nature
formalized within a Structural Causal Model (SCM). These mechanisms are then combined 
with three patterns of reasoning concerning \emph{observed} phenomena expressed at the 
corresponding layers of the hierarchy, known as \emph{probabilistic} (also called 
\emph{associational} in the AI literature), 
\emph{interventional},  and \emph{counterfactual} (for formal definitions of these concepts, see 
Sec.~\ref{sec:preliminaries}).

A basic term at the probabilistic/associational layer is expressed as a common probability, 
such as\footnote{In our paper, we consider random variables over discrete, finite domains. By an event
we mean a propositional formula over atomic events of the form $X=x$, such 
as $(X=x\wedge Y=y)$ or $(X=x \vee Y\neq y)$. Moreover, by $\PP{Y=y, X=x}$, etc., we mean, 
as usually, $\PP{X=x \wedge Y=y}$. Finally by $\PP{x, y}$ we abbreviate  $\PP{Y=y, X=x}$.}
$\PP{x, y}$. This may represent queries like ``How likely does a patient 
have both diabetes $(X=x)$ and high blood pressure $(Y=y)$?'' 
From basic terms, we can build more complex terms by using additions (linear terms) 
or even arbitrary polynomials (polynomial terms). This can be combined with the use of unary summation operator, which allows to express marginalization in a compact way.
Formulas at this layer consist 
of Boolean combinations of (in)equalities of basic,~linear or, in the general case, polynomial terms.
The interventional patterns extend the basic probability terms by allowing the use of Pearl's 
do-operator~\citep{Pearl2009} which models an experiment like a Randomized Controlled Trial \citep{fisher1936design}. 
For instance, $\PP{[x]y}$ which\footnote{A common and popular 
notation for the post-interventional probability is $\PP{Y=y\mmid \compactEquals{\dop(X=x)}}$. 
In this paper, however,  we use the notation $\PP{[X=x]Y=y}$ since it is more convenient for our analysis.}, 
in general differs from $\PP{y\mmid x}$, allows to ask hypothetical questions such as, e.g., 
``How likely is it that a patient's headache will be cured $(Y=y)$ if he or she takes aspirin $(X=x)$?''. 
An example formula at this layer is $\PP{[x]y}=\sum_z \PP{y\mmid x,z}\PP{z}$
which estimates the causal effect of the intervention  $\dop(X = x)$ (all patients take aspirin) 
on outcome variable $Y = y$ (headache cure). It illustrates the use of the prominent back-door 
adjustment to eliminate the confounding effect of a factor represented by the variable $Z$~\citep{Pearl2009}.
The basic terms at the highest level of the hierarchy enable us to formulate queries related to counterfactual 
situations. For example, $\PP{[X=x]Y=y \mmid (X=x',Y=y')}$  expresses the probability that, for instance, 
a patient who did not receive a vaccine $(X=x')$ and died $(Y=y')$ would have lived ($Y=y$) if he or 
she had been vaccinated ($X=x$).

The computational complexity aspects of reasoning about uncertainty in this framework 
have been the subject of intensive studies in the past decades, especially in the case 
of probabilistic inference with the input probability distributions encoded by Bayesian 
networks~(see, e.g., \citep{pearl1988probabilistic,cooper1990computational,dagum1993approximating,roth1996hardness,park2004complexity}). 
The main focus of our work is on the computational complexity of \emph{satisfiability} 
problems and their \emph{validity} counterparts which enable formulating precise 
assumptions on data and implications of causal explanations.

The problems take as input a Boolean combination of 
(in)equalities of terms at the PCH-layer of interest with the task  to decide 
if there exists a satisfying SCM for the input formula 
or if the formula is valid for all SCMs, respectively.
For example, for binary random variables $X$ and $Y$, the formula consisting of the single equality
$\sum_{x}\sum_{y}\PP{(X=x)\wedge (X =\hspace{-2.2mm}/ \ x \vee Y=y) \wedge (X =\hspace{-2.2mm}/ \ x \vee Y =\hspace{-2.2mm}/ \  y) }=0$
in the language of the probabilistic layer is satisfied 
since there exists an SCM in which it is true (in fact, the formula holds in any SCM)\footnote{Interestingly, the instance can be seen as a result of reduction from the not-satisfiable  Boolean formula $a \wedge(\overline{a} \vee {b})\wedge(\overline{a} \vee \overline{b})$.}. 
An SCM for inputs at this layer can be identified with 
the standard joint probability distribution, in our case, with $\pp{X=x, Y=y}$, for $x, y\in\{0,1\}$.
 
The complexity of the studied satisfiability problems 
depends on the combination of two factors: $(1)$~the PCH-layer to which the basic terms belong and 
$(2)$~the operators which can be used to specify the (in)equalities of the input formula. 
The most basic operators are ``$+$'' and ``$\cdot$'' (leading to get linear, resp., polynomial terms) and, meaningful in causality, the unary summation operator~$\Sigma$ used to express marginalization.
Of interest is also conditioning, which will be discussed in our paper, as well.
The main interest of our research is focused on the precise characterization of the computational complexity 
of satisfiability problems (and their validity counterparts) for languages of all PCH layers,
combined with increasing the expressiveness of (in)equalities by enabling the use of more 
complex operators.

\paragraph*{Related Work to our Study.}
In their seminal paper, Fagin, Halpern, and Megiddo~(\citeyear{fagin1990logic}) explore the language of the lowest probabilistic layer of PCH consisting of Boolean combinations of (in)equalities of \emph{basic} and \emph{linear} terms. Besides the complete axiomatization for the used logic, they show that the problem of deciding satisfiability is $\NP$-complete indicating that the complexity is surprisingly no worse than that of propositional logic. The authors~subsequently extend the language to include (in)equalities of \emph{polynomial} terms, aiming to facilitate reasoning about conditional probabilities. While they establish the existence of a $\PSPACE$ algorithm for deciding whether such a formula is satisfiable, they leave the exact complexity open. Recently, Moss{\'e}, Ibeling, and Icard (\citeyear{ibeling2022mosse}) resolved this issue by demonstrating that deciding satisfiability is $\existsR$-complete, where $\exists\R$ represents the well-studied class defined as the closure of the Existential Theory of the Reals (ETR) under polynomial-time many-one reductions. Furthermore, for the higher, more expressive PCH layers Moss{\'e} et al.~prove that for (in)equalities of polynomial terms both at the interventional and the counterfactual layer the decision problems still remain $\existsR$-complete (we recall the definitions of the complexity classes in Sec.~\ref{sec:sat:problems}). 

The languages used in these studies, and also in other works as, e.g.,  \citep{nilsson1986probabilistic,georgakopoulos1988probabilistic,ibeling2020probabilistic}, are able to fully express probabilistic reasoning, resp., inferring interventional and counterfactual predictions. In particular, they allow one to express \emph{marginalization}
which is a common paradigm in this field. However, since the languages \emph{do not} include the unary summation operator $\Sigma$, the abilities of expressing marginalization are relatively limited. Thus, for instance, to express the marginal distribution of a random variable $Y$ over a subset of (binary) variables $\{Z_1,\ldots,Z_m\}\subseteq \{X_1,\ldots, X_n\}$ as $\sum_{z_1,\ldots,z_m} \PP{y,z_1,\ldots,z_m}$, an encoding without summation requires an expansion 
$\PP{y,Z_1=0,\ldots,Z_m=0} + \ldots +\PP{y,Z_1=1,\ldots,Z_m=1}$ of exponential size in $m$. Consequently, to analyze the complexity aspects of the problems under study, languages allowing standard notation for encoding marginalization using the $\Sigma$ operator are needed. In  \citep{zander2023ijcai}, the authors present a first systematic study in this setting. They introduce a new natural complexity class, named $\succR$, which can be viewed as a succinct variant of $\existsR$, and show that the satisfiability for the (in)equalities of \emph{polynomial} terms, both at the \emph{probabilistic} and \emph{interventional} layer, are  complete for $\succR$. They leave open the exact complexity for the \emph{counterfactual} case. Moreover, the remaining variants (basic and linear terms) remain unexplored for all PCH layers.

\paragraph*{Our Contribution.} 
The previous research establishes that, from a computational perspective, many problems 
for the interventional and counterfactual reasoning are not harder than for pure probabilistic
reasoning. In our work, we show  that the situation changes significantly if, to express marginalization,
the common summation operator is used. 
Below we highlight our main  contributions, partially summarized also in Table~\ref{tab:overview} which involve complexity classes related to each other as follows\footnote{The relationship between $\exists\R$ and $\NP^{\ccPP}$ is unknown.}:
\begin{equation}
  \NP \subseteq \exists\R, \NP^{\ccPP} \subseteq \PSPACE \subseteq \NEXP \subseteq \succR \subseteq \EXPSPACE \label{eq:incl:compl:classes}
\end{equation}
\refstepcounter{theorem}\label{tab:overview}
\begin{minipage}[t]{0.39\textwidth}
\vspace*{-19mm}
$\bullet$
	For combinations of (in)equalities of \emph{basic} and \emph{linear} terms,
	unlike previous results, the compact summation for marginalization increases 
	the complexity, depending on the level of the PCH: from \NP$^{\ccPP}$-, through 
	$\PSPACE$-, to $\NEXP$-completeness.\\[1mm]
$\bullet$
	 The counterfactual satisfiability for (in)equalities of \emph{polynomial} terms
	 is $\succR$-complete, which solves the open problem in \citep{zander2023ijcai}. \\[1mm]
$\bullet$ 
	Accordingly, the validity problems for the languages above are complete for ~the ~corresponding ~complement
\end{minipage}\hspace*{3mm}
\begin{minipage}[t]{0.58\textwidth}
{\footnotesize
\begin{tabular}{|c    | c@{\hskip 0.25cm} | c@{\hskip 0.25cm} |c@{\hskip 0.25cm}   |}
\multicolumn{4}{}{}\\
\hline
 \multicolumn{1}{|c|}{\multirow{2}{*}{ Terms}} & 
  \multirow{2}{*}{$\cL_1$~(prob.)} &  \multirow{2}{*}{$\cL_2$~(interv.)} &  \multirow{2}{*}{$\cL_3$~(count.)  }   \\ 
  &&&
  \\
  \hline \hline 
&\multicolumn{3}{|c|}{}\vspace*{-3.5mm}\\
\multicolumn{1}{|c|}{\multirow{1.2}{*}{basic}}
		&  \multicolumn{3}{|c|}{\multirow{2.4}{*}{$\NP$~~~$(a)$}} 
		\\  \cline{1-1} 
\multicolumn{1}{|c|}{\multirow{1.2}{*}{lin} }
&  \multicolumn{3}{c|}{\multirow{2}{*}{}} 
				\\ \cline{1-1} \cline{2-4} 
\multicolumn{1}{|c|}{\multirow{2}{*}{poly}}
		&  \multicolumn{3}{|c|}{\multirow{2}{*}{$\exists\R$~~~$(b)$}}
			\\
		 &\multicolumn{3}{|c|}{}  \\  
    \cline{1-1}\cline{2-4}\vspace*{-2.8mm}\\ \cline{1-1}\cline{2-4}		
\multicolumn{1}{|c|}{\multirow{1.2}{*}{basic \& marg.}} & \multirow{2.4}{*}{$\NP^{\ccPP}$ $(1)$} & \multirow{2.4}{*}{$\PSPACE$ $(2)$} & 		
		\multirow{2.4}{*}{$\NEXP$ $(3)$}  
		\\ \cline{1-1} 
\multicolumn{1}{|c|}{ \multirow{1.2}{*}{lin \& marg.} }
& \multirow{2.4}{*}{} & \multirow{2.4}{*}{} & \multirow{2.4}{*}{}
 		\\  \cline{1-1}\cline{1-4}
\multicolumn{1}{|c|}{ \multirow{2}{*}{poly \& marg.} }
		& \multicolumn{3}{|c|}{\multirow{2}{*}{ $ \succR$~~~$(c, 4)$}}\\
		\multicolumn{1}{|c|}{}&\multicolumn{3}{|c|}{}  \\  \hline		
\end{tabular}\\[2mm]
Table~\thetheorem: Completeness results for the satisfiability problems $(a)$ for $\cL_1$ \citep{fagin1990logic}, for $\cL_2$ and $\cL_3$ \citep{ibeling2022mosse},
$(b)$ \citep{ibeling2022mosse},
$(c)$ for $\cL_1$ and $\cL_2$ \citep{zander2023ijcai}.
Our results $(1)$-$(4)$: 
Theorem~\ref{thm:main:sat:prob:causal:booleanclasses:prob}, \ref{thm:main:sat:prob:causal:booleanclasses:causal}, \ref{thm:main:sat:prob:causal:booleanclasses:counter}, resp.~Theorem~\ref{thm:satcauslfull:in:real:nexp}.
}
\end{minipage}

\noindent
 complexity classes. Interestingly, both satisfiability and validity for basic and linear languages with marginalization are 
$ \PSPACE$-complete at the interventional layer.
\vspace*{0.5mm}

Our results demonstrate, for the first time, a strictly increasing complexity of reasoning 
across the PCH -- from probabilistic to causal to counterfactual reasoning -- under the 
widely accepted assumption that the inclusions $\NP^{\ccPP} \subseteq \PSPACE \subseteq \NEXP$ in Eq.~(\ref{eq:incl:compl:classes}) above are proper. 
This relation, in the case of basic and linear languages with marginalization, 
aligns with the strength of their expressive power: From previous research 
we know that the probabilistic languages are less expressive than the causal 
languages, and the causal languages, in turn, are less expressive than 
the corresponding counterfactual languages (for more discussion on 
this, see Sec.~\ref{sec:increasing}).

In addition, the impact of establishing exact completeness results for probabilistic, causal, and counterfactual reasoning, as stated in our work, lies in their implications for algorithmic approaches to solve these problems.  Under widely accepted complexity assumptions like, e.g.,  $\NP \not= \PSPACE$, the completeness of a problem highlights inherent
limitations in applying algorithmic techniques, such as dynamic programming, divide-and-conquer, SAT- or ILP-solvers, which are only effective for $\NP$-complete problems. This, in turn, justifies the use of  heuristics or algorithms of exponential worst-case complexity.
Moreover, using the $\succR$-completeness as a yardstick for measuring computational complexity of problems, we show that the complexity of counterfactual reasoning (for the most general queries) remains the same as for common probabilistic reasoning. This is quite a surprising result, as the difference between the expressive power of both settings is huge.

\paragraph*{Structure of the Paper.}
In Sec.~\ref{sec:preliminaries}, we provide
the main concepts of causation and define formally problems considered 
in this work. We derive the complexity of satisfiability for basic and linear languages in
Sec.~\ref{sec:increasing} and for polynomial languages in Sec.~\ref{sec:succR:compl:satsumpoly}.
Due to space constraints, some proofs are omitted from the main text, and only proof outlines are provided. The complete proofs can be found in Sec.~\ref{sec:appendix} in the appendix.
Furthermore,
Sec.~\ref{sec:appendix:example:PCH} in the appendix, provides an example  
illustrating the three types of reasoning in the framework of PCH
and in Sec.~\ref{sec:appendix:formal:definitions:syntax:and:semantics} we give formal definitions
for syntax and semantics of the languages of the hierarchy.

\section{Preliminaries}
\removelastskip
\label{sec:preliminaries}

\def\vspacebetweenlayers{\vspace{0.15cm}}
\vspacebetweenlayers
In this section, we give definitions of the main concepts of the theory of causation, including 
the Structural Causal Model (SCM), provide syntax and semantics for the languages 
of the PCH, and discuss the complexity classes used in the paper. 
To help understand the formal definitions, we encourage the reader unfamiliar with 
the theory of causation to refer to Section~\ref{sec:appendix:example:PCH} in the appendix, 
where we provide an example that, we hope, will make it easier to understand the formal 
definitions and the intuitions behind them.

\subsection{The Languages of Causal Hierarchy}
We give here an informal but reasonably precise description of the syntax and semantics of the languages studied in this paper.
For formal definitions, see Section~\ref{sec:appendix:formal:definitions:syntax:and:semantics} in the appendix.

We always consider discrete distributions 
and represent the values  of the random variables as $\mathit{Val} = \{0,1,\myldots, \maxvaluecount - 1\}$. 
We denote by $\bX$ the set of variables used in a system and by capital letters $X_1,X_2, \myldots$, 
we denote the individual variables. 
We assume that $\mathit{Val}$ is fixed and of cardinality at least two, and that all variables $X_i$ share the same domain $\mathit{Val}$.
A value of $X_i$ is often denoted by $x_i$ or a natural number.
By an \emph{atomic} event, we mean an event of the form $X=x$, where 
$X$ is a random variable and $x$ is a value in the domain of $X$. 
The language $\Eprop$ of propositional formulas $\delta$ over atomic events is defined 
as the closure of such events under the Boolean operators $\wedge$ and $\neg$. 
The atomic intervention is either empty $\bot$ or of the form $X=x$. An intervention 
formula is a conjunction of atomic interventions.
The language of post-interventional events, denoted as $\Epint$, 
consists the formulas of the form $[\alpha]\delta$
where $\alpha$ is an intervention and $\delta$ is in $\Eprop$. 
The language of counterfactual events, $\Ecounter$, 
is the set $\Epint$ closed under $\wedge$ and $\neg$.

The PCH consists of languages on three layers
each of which is based on terms of the form $\PP{\delta_i}$, with $i=1,2,3$.
For the observational (associational) language (Layer~1), we have  $\delta_1\in \Eprop$,
for the interventional language (Layer~2),  we have $\delta_2\in \Epint$, and, 
for the counterfactual language (Layer~3), $\delta_3\in \Ecounter$. 
The expressive power and computational complexity properties of the languages depend 
largely on the operations that are allowed to apply on terms $\PP{\delta_i}$.
Allowing gradually more complex operators, we define
the languages which are the subject of our studies. 
The terms for levels $i=1,2,3$ are described as follows.
The basic terms, denoted as $\Ticomp$, are probabilities $\PP{\delta_i}$ as, e.g.,
$\PP{X_1 = x_1 \vee X_2 = x_2}$ in $\cT\probname{\it base}{1}$ or $\PP{[X_1 = x_1] X_2 = x_2}$ in $\cT\probname{\it base}{2}$.
From basic terms, we build more complex linear terms $\Tilin$ by using additions 
and polynomial terms $\Tipoly$ by using arbitrary polynomials.
By $\Ticompsum$, $\Tilinsum$, and $\Tipolysum$, we denote the 
corresponding sets of terms when including a unary marginalization operator of the form
$\sum_{x} \bt$ for a term $\bt$.
In the summation, we have a dummy variable $x$ which ranges 
over all values $0,1,\ldots, \maxvaluecount - 1$. The summation $\sum_{x} \bt$ is a 
purely syntactical concept which represents the sum 
$\bt[\sfrac{0}{ x}]  +\bt[\sfrac{1}{x}]+\myldots +\bt[\sfrac{\maxvaluecount - 1}{x}]$,
where by $\bt[\sfrac{v}{x}]$, we mean the expression in which all occurrences of $x$
are replaced with value $v$.
For example,  for  $\mathit{Val} = \{0,1\}$,
the expression 
 $\sum_{x} \PP{Y=1, X=x}$
 semantically represents $\PP{Y=1, X=0} + \PP{Y=1, X=1}$.

Now, let 
$
\Lab= \{
\textit{base}, \textit{base}\langle{\Sigma}\rangle, 
\textit{lin}, \textit{lin}\langle{\Sigma}\rangle,     
\textit{poly}, \textit{poly}\langle{\Sigma}\rangle
\}
$
denote the labels of all variants of languages. Then for each   $*\in\Lab$ and $i=1,2,3$, we define
the languages $\Listar$ of Boolean combinations of inequalities in a standard way 
by the grammars: $ \bff ::= \bt \le \bt' \mid \neg \bff \mid \bff \wedge \bff$
where $ \bt,\bt'$ are terms in  $ {\cal T}^{*}_{i} $.
Although the languages and their operations can appear rather restricted, all the usual elements of probabilistic and causal formulas can be encoded in a natural way
(see Section~\ref{sec:appendix:formal:definitions:syntax:and:semantics} in the appendix for more discussion). 

To define the semantics, 
we use  SCMs as in \cite[Sec.~3.2]{Pearl2009}.
An SCM 
is a tuple $\fM=(\cF, P, $ $\bU, \bX)$, with 
exogenous variables $\bU$ and 
endogenous variables $\bX=\{X_1,\ldots, X_n\}$.
$\cF=\{ F_1,\myldots,F_n\}$ consists of
functions such that 
$F_i$ calculates the value of variable $X_i$ from the values 
$(\bx, \bu)$ 
as  
$F_i(\pa_i,\bu_i)$, where\footnote{We consider recursive models, 
that is, we assume the endogenous variables 
are ordered such that variable $X_i$ (i.e. function $F_i$) is not affected by any 
 $X_j$ with $j > i$.},
 $\Pa_i\subseteq \bX$ and $\bU_i\subseteq \bU$. 
 $\Pa_i$ are all endogenous variables that influence $X_i$ and $\pa_i$ are their values\footnote{SCMs are often represented as graphs, in which case the variables $\Pa_i$ can be represented as the parents of variable $X_i$. However, the definition using functions does not refer to any graphs. }.  
$P$ specifies a probability distribution 
of all exogenous  variables~$\bU$.
Without loss of generality, we assume that the domains of exogenous variables are also discrete and finite. 
The functions $F_i$ are deterministic, i.e., the value of every endogenous variable is uniquely determined given the values of the exogenous variables. Since the exogenous variables follow a probability distribution, this implies a probability distribution over the endogenous variables.

For any basic  intervention formula $[\let\IsInPP=1 X_i=x_i]$ 
(which is our notation for Pearl's do-operator $\let\IsInPP=1 \dop(X_i=x_i)$),  we denote 
by $\cF_{X_i=x_i}$ the functions obtained from $\cF$ by replacing $F_i$
with the constant function $F_i(\pa_i,\bu_i):=x_i$. 
We generalize this definition for any intervention 
$\alpha$ in a natural way and denote as 
$\cF_{\alpha}$ the resulting functions.
For any $\varphi\in \Eprop$,  we write $\cF, \bu \models \varphi$
if $\varphi$ is satisfied for the values of $\bX$ calculated from the values $\bu$.
For any intervention $\alpha$, we write $\cF, \bu \models [\alpha]\varphi$ if  
$\cF_{\alpha}, \bu \models \varphi$. And for all $\psi,\psi_1,\psi_2\in \Ecounter$,
we write  $(i)$ $\cF, \bu \models \neg\psi$ if  $\cF, \bu \not\models \psi$ and 
$(ii)$ $\cF, \bu \models \psi_1 \wedge \psi_2$ if  $\cF, \bu \models \psi_1$
and $\cF, \bu \models \psi_2$.
Finally, for $\psi\in  \Ecounter$, let $S_{\fM}(\psi)=\{\bu \mid \cF, \bu \models \psi\}$ be the set of values of $\bU$ satisfying $\psi$.
For some expression $\be$, we define the value $\llbracket \be \rrbracket_{\fM}$ of the expression $\be$ given a model $\fM$, 
recursively in a natural way, 
starting with basic terms as follows 
$\llbracket \PP{\psi} \rrbracket_{\fM} = \sum_{\bu\in S_{\fM}(\psi)}P(\bu)$
and, for $\delta\in\Eprop$, $\llbracket \PP{\psi\mmid \delta} \rrbracket_{\fM} = \llbracket\PP{\psi \wedge \delta} \rrbracket_{\fM}/ \llbracket\PP{\delta} \rrbracket_{\fM}$, assuming that the expression is undefined if  $\llbracket\PP{\delta} \rrbracket_{\fM}=0$.
We will sometimes write $P_{\fM}(\psi)$ instead of $\llbracket \PP{\psi} \rrbracket_{\fM}$, for short.
For two expressions $\be_1$ and $\be_2$, we define 
$ \fM \models  \be_1 \le  \be_2$, iff, 
$\llbracket \be_1 \rrbracket_{\fM}\le \llbracket \be_2 \rrbracket_{\fM}.$
The semantics for negation and conjunction are defined in the usual way,
giving the semantics for $\fM \models \varphi$ for any formula $\varphi$
in $\Lcausalstar$.

\subsection{Satisfiability for PCH Languages and Relevant Complexity Classes}
\label{sec:sat:problems}
The (decision) satisfiability problems for languages of PCH, denoted by $\SATistar$,  
with $i=1,2,3$ and $*\in \Lab$,
take as input a formula $\varphi$ in 
$\Listar$ and  ask whether there exists a model 
$\fM$ such that $\fM \models\varphi$.
Analogously, the validity problems for $\Listar$ consists in deciding whether, for a given $\varphi$,
$\fM \models\varphi$ holds
for all models  $\fM$.
From the definitions, it is obvious that variants of the problems for the level $i$
are at least as hard as their counterparts at a lower level. 

We note, that  the satisfiability problem (and its complement, the validity problem) does not assume anything about SCMs, including their structure. However, our languages allow queries of the form $\psi \Rightarrow \varphi$,  which enable us to verify satisfiability, resp., the validity, for the formula $\varphi$ in SCMs which satisfy properties expressed by the formula $\psi$, whereby, e.g., $\psi$ can encode a graph structure of the model. Thus, the formalism used in our work allows for the formulation of a wide range of queries.

To measure the computational complexity of $\SATistar$, a central role play the
following, well-known Boolean complexity classes $\NP, \PSPACE, \NEXP,$ and $\EXPSPACE$   
(for formal definitions see, e.g., \cite{arora2009computational}).
Recent research has shown that the precise complexity of several natural 
satisfiability problems can be expressed in terms of the classes over the real numbers 
$\existsR$ and $\succR$.  For a comprehensive compendium on  $\exists \mathbb{R}$, see \cite{schaefer2024existential}. Recall, that the existential theory of the reals $(\ETR)$ is the set of true sentences
of the form
\begin{equation} \label{eq:etr:1}
   \exists x_1 \dots \exists x_n \varphi(x_1,\dots,x_n),
\end{equation}
where $\varphi$ is a quantifier-free Boolean formula over the basis $\{\vee, \wedge, \neg\}$
and a signature consisting of the constants $0$ and $1$, the functional symbols
$+$ and $\cdot$, and the relational symbols $<$, $\le$, and $=$. The sentence
is interpreted over the real numbers in the standard way.
The theory forms its own complexity class $\exists \mathbb{R}$ which is 
defined as the closure of ETR under polynomial time many-one reductions
\citep{grigoriev1988solving,existentialTheoryOfRealsCanny1988some,existentialTheoryOfRealsSchaefer2009complexity,schaefer2024existential}.
A succinct variant of $\ETR$, denoted as  $\succETR$, and the corresponding 
class $\succR$, have been introduced by \cite{zander2023ijcai}.
$\succETR$ is the set of all Boolean circuits $C$ that encode 
a true sentence as in $(\ref{eq:etr:1})$ as follows. Assume that 
$C$ computes a function $\{0,1\}^N \to \{0,1\}^M$. Then $\{0,1\}^N$ 
represents the node 
set of the tree underlying $\varphi$
and $C(i)$ is an encoding of the description of node $i$, consisting of the label
of $i$, its parent, and its two children. The variables in 
$\varphi$ 
are $x_1,\dots,x_{2^N}$. As in the case of $\existsR$, to 
$\succR$ belong all languages 
which are polynomial time many-one reducible to $\succETR$.

For two computational problems $A,B$, we will write $A\leqp B$ if $A$ can be reduced to $B$ in polynomial time, which means $A$ is not harder to solve than $B$. A problem $A$ is complete for a complexity class $\cC$, if $A \in \cC$ and, for every other problem $B\in\cC$, it holds $B\leqp A$. By ${\tt co}\mbox{-}\cC$, we denote the class of all problems
$A$ such that its complements $\overline{A}$ belong to $\cC$.

\section{The Increasing Complexity of Satisfiability in PCH for Linear Languages with Marginalization}\label{sec:increasing}
The expressive power of the languages $\Listar$, with  $*\in  \{\textit{base}, \textit{lin},  \textit{poly}\}$
and the layers $i=1,2,3$ has been the subject of intensive research. It is well known, 
see e.g.~\citep{Pearl2009,bareinboim2022pearl,ibeling2022mosse,suppes1981probabilistic}, that they
form strict hierarchies along two dimensions: First, on each layer $i$, the languages $\Licomp, \Lilin,$ 
and $\Lipoly$ have increasing expressiveness; Second, for every $*\in \{\textit{base}, \textit{lin},  \textit{poly}\}$
\begin{equation}\label{eq:expressiveness}
   \cL_{1}^* \subsetneq  \cL_{2}^*  \subsetneq  \cL_{3}^*
\end{equation}
where the proper inclusion means that the language $\cL_{i}^*$ is less expressive than $\cL_{i+1}^*$.
Note that since adding marginalization does not change the expressiveness of the language  $\Listar$,
the strict inclusions in~(\ref{eq:expressiveness}) hold also for  $*\in  \{\textit{base}\langle{\Sigma}\rangle,  \textit{lin}\langle{\Sigma}\rangle,  \textit{poly}\langle{\Sigma}\rangle\}$.

To prove such a proper inclusion, it suffices to show two SCMs that are indistinguishable in the less expressive language, but that can be distinguished by some formula in the more expressive language.
E.g., the SCMs: $\fM=(\cF, P,\bU,\bX)$ and $\fM'=(\cF', P,\bU,\bX),$ with binary variables 
$\bU=\{U_1,U_2\}, \bX=\{X_1,X_2\},$ probabilities $P(U_i=0)=1/2$,
and mechanism $\cF$: $X_i:=U_i$, resp.~$\cF'$: $X_1:=U_1 U_2+(1-U_1)(1-U_2)$, $X_2:=U_1+X_1(1-U_1)U_2$,
have the same distributions $P_{\fM}(X_1,X_2)=P_{\fM'}(X_1,X_2)$. Thus, $\fM$ and $\fM'$ are indistinguishable
in \emph{any} language of the probabilistic layer. On the other hand, after the intervention $X_1=1$,
we get $\compactEquals{P_{\fM}([X_1=1]X_2=1)}=1/2$ and $\compactEquals{P_{\fM'}([X_1=1]X_2=1)}=3/4$. 
Then, e.g., for the $\Linterventcomp$ formula  
$\varphi: \PP{[X_1=1]X_2=1}=\PP{[X_1=1]X_2=0}$, we have 
$\fM \models \varphi$, but $\fM' \not\models \varphi$ which means that $\varphi$ distinguishes $\fM$ from $\fM'$.

This section focuses on the basic and linear languages across the PCH (the case of polynomial languages will be 
discussed in Sec.~\ref{sec:succR:compl:satsumpoly} separately).
As mentioned in the introduction, a comparison of these languages from the 
perspective of computational complexity reveals surprisingly different properties than the ones described above.
For basic and linear languages disallowing marginalization, the satisfiability for the counterfactual 
level remains the same as for the probabilistic and causal levels: problems $\SATicomp$ and $\SATilin$
for all $i=1,2,3$ are $\NP$-complete, i.e., as hard as reasoning about propositional logic formulas 
(\citep{fagin1990logic,ibeling2022mosse}, cf.~also Table~\ref{tab:overview}).
In this section, we show that the situation changes drastically when marginalization is allowed:
satisfiability problems $\SATicompsum$ and $\SATilinsum$ became $\NP^{\ccPP}$-, \PSPACE-, resp.~\NEXP-complete 
depending on the level $i$. This demonstrates the first
strictly increasing complexity of reasoning across the PCH,  assuming the widely accepted 
assumption that the inclusions $\NP^{\ccPP} \subseteq \PSPACE \subseteq \NEXP$ are proper.

\subsection{The Probabilistic (Observational) Level}\label{sec:increasing:prob}
Marginalization via the summation operator, combined with the language 
expressing events~$\delta$ on a specific level of PCH, increases 
the complexity of reasoning to varying degrees, depending on the level.
In the probabilistic case, it jumps from $\NP$- to $\NP^{\ccPP}$-completeness
since the atomic terms $\PP{\delta}$ can contain Boolean formulas $\delta$, 
which, combined with the summation operator,  
allows to count the number of all satisfying assignments of a Boolean formula 
by summing over all possible values for the random variables in the formula. 
Determining this count is the canonical \ccPP-complete problem, so evaluating the equations given a model is \ccPP-hard. From this, together with the need of finding a model,  we will conclude in this section
that $\SATprobcompsum$ and $\SATproblinsum$ are \NP$^{\ccPP}$-complete.

We start with a technical but useful fact that a sum in the probabilistic language 
can be partitioned into a sum over probabilities and a sum over purely logical terms.
This generalizes the property shown by \cite{fagin1990logic} in Lemma~2.3 for languages 
without the summation operator.
\begin{fact}\label{fact:linear:prob:logic:separation} 
Let $\delta\in \Eprop$  be a  propositional formula over variables 
$X_{i_1}, \ldots, X_{i_l}$. A sum 
$\sum_{x_{i_1}}\ldots\sum_{x_{i_l}} \PP{\delta}$
is equal to
$\sum_{\hat{x}_1} \ldots \sum_{\hat{x}_n} p_{\hat{x}_1\ldots \hat{x}_n} \sum_{x_{i_1}}\ldots\sum_{x_{i_l}} \delta_{\hat{x}_1\ldots \hat{x}_n}(x_{i_1},\ldots,x_{i_l}) $
where the range of the sums is the entire domain, $p_{\hat{x}_1\ldots \hat{x}_n}$ is the probability of 	
$\PP{X_1=\hat{x}_1 \wedge \ldots \wedge X_n=\hat{x}_n}$ and 
$ \delta_{\hat{x}_1\ldots \hat{x}_n}(x_{i_1},\ldots,x_{i_l})$
 a function that returns $1$ if the implication 
 $\compactEquals{(X_1=\hat{x}_1 \wedge \ldots \wedge X_n=\hat{x}_n)} \ \to \ \delta(x_{i_1},\ldots,x_{i_l})$
 is a tautology and $0$ otherwise.
\end{fact}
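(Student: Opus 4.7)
The plan is to prove the identity model-wise by combining two elementary ingredients: the law of total probability (conditioning on a complete atomic event over $X_1,\ldots,X_n$) and Fubini for finite sums. The separation between the probability factor $p_{\hat{x}_1\ldots\hat{x}_n}$ and the purely syntactic indicator $\delta_{\hat{x}_1\ldots\hat{x}_n}$ will drop out once I observe that, conditional on a complete assignment $X_1=\hat{x}_1,\ldots,X_n=\hat{x}_n$, the truth value of the propositional formula $\delta$ (with the placeholder values $x_{i_1},\ldots,x_{i_l}$ substituted in) is deterministically decided, and its value coincides with the indicator defined in the statement.

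Concretely, I would carry out three steps. First, I fix any model and any particular values of the dummy variables $x_{i_1},\ldots,x_{i_l}$; since the events $\{X_1=\hat{x}_1 \wedge \ldots \wedge X_n=\hat{x}_n\}$ ranging over $(\hat{x}_1,\ldots,\hat{x}_n)\in\mathit{Val}^n$ form a finite partition of the outcome space and $\delta$ mentions only $X_{i_1},\ldots,X_{i_l}$, the law of total probability will give
$$\PP{\delta}\;=\;\sum_{\hat{x}_1}\ldots\sum_{\hat{x}_n} p_{\hat{x}_1\ldots\hat{x}_n}\,\delta_{\hat{x}_1\ldots\hat{x}_n}(x_{i_1},\ldots,x_{i_l}),$$
where the indicator is $1$ precisely when the implication $(X_1=\hat{x}_1 \wedge \ldots \wedge X_n=\hat{x}_n) \to \delta(x_{i_1},\ldots,x_{i_l})$ is a tautology, matching the statement's definition. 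Second, I sum both sides over $x_{i_1},\ldots,x_{i_l}$. Third, because $p_{\hat{x}_1\ldots\hat{x}_n}$ does not depend on the dummy variables, I swap the (all finite) sums on the right to move the $\hat{x}_j$-sums outward, obtaining exactly the right-hand side of the claim.

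The main -- and rather mild -- obstacle will be purely bookkeeping: keeping the two families of bound variables cleanly separated, namely the marginalization/dummy values $x_{i_j}$ that act as placeholders inside the formula $\delta$ versus the complete-outcome values $\hat{x}_j$ indexing the partition of the probability space. Once this is pinned down, the argument is essentially the total-probability identity applied pointwise in the $x_{i_j}$ followed by a swap of finite sums, which is the natural generalization of Lemma~2.3 of \cite{fagin1990logic} to the language with marginalization.
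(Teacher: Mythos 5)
Your proposal is correct and follows essentially the same route as the paper's proof: partition via the complete atomic events $X_1=\hat{x}_1\wedge\ldots\wedge X_n=\hat{x}_n$, observe that $\PP{(X_1=\hat{x}_1 \wedge \ldots \wedge X_n=\hat{x}_n)\wedge\delta}$ factors as $p_{\hat{x}_1\ldots\hat{x}_n}$ times the tautology indicator, and exchange the finite sums. The only cosmetic difference is that you apply the total-probability decomposition pointwise before summing over the dummies, whereas the paper inserts the partition inside the outer sums and then swaps; these are the same computation.
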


The canonical satisfiability problem {\sc Sat}, where instances consist 
of Boolean formulas in propositional logic, plays a key role in a broad 
range of research fields, since all problems in $\NP$, including many 
real-world tasks can be naturally reduced to it. As discussed 
earlier, $\SATprobcomp$ and $\SATproblin$ do not provide greater 
modeling power than {\sc Sat}. Enabling the use of summation significantly 
changes this situation: below we demonstrate the expressiveness 
of $\SATprobcompsum$,  showing that any problem in \NP$^{\ccPP}$
 can be reduced to it in polynomial time.

\begin{lemma}\label{lem:prob:lin:sum:NP:PP:hard}
$\SATprobcompsum$ is \NP$^{\ccPP}$-hard.
\end{lemma}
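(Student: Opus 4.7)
The plan is to reduce the canonical $\NP^{\ccPP}$-complete problem E-MAJSAT to $\SATprobcompsum$. Given a Boolean formula $\varphi(x_1,\dots,x_n,y_1,\dots,y_m)$, E-MAJSAT asks whether some $\bx^*\in\{0,1\}^n$ satisfies $|\{\by\in\{0,1\}^m : \varphi(\bx^*,\by)\}|\ge 2^{m-1}$. I will produce, in polynomial time, a $\Lprobcompsum$-formula $\Phi$ whose satisfying SCMs are in bijection with E-MAJSAT witnesses: the existential quantifier will be encoded by the SCM's choice of the deterministic values of auxiliary variables, and the $\ccPP$-query (counting satisfying $\by$'s) will be realised by outer summation over dummy variables.

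First I introduce random variables $E_1,\dots,E_n$ (which will carry the existential witness) together with two auxiliary variables $C$ and $D$ serving as the deterministic constants $1$ and $0$. Using only comparisons of basic terms and Boolean combinations I add polynomial-size conjuncts forcing $C\equiv 1$ and $D\equiv 0$ almost surely (the scalars $0$ and $1$ are available as $\PP{\delta}$ for fixed contradictory and tautological $\delta\in\Eprop$, e.g.\ $X=0\wedge X\neqinPP 0$ and $X=0\vee X\neqinPP 0$), together with the disjunction $\PP{E_i=0}=1 \,\vee\, \PP{E_i=1}=1$ for each $i$, which forces $E_i$ to be almost-surely constant in $\{0,1\}$. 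Across satisfying SCMs the deterministic values $\be^*\in\{0,1\}^n$ of $\bE$ then range exactly over the existential quantifier of E-MAJSAT.

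The central trick is to translate $\varphi$ into a single propositional formula $\delta_\varphi(\bE,C,\by^*)\in\Eprop$ by replacing each positive literal $x_i$ by the atomic event $E_i=1$ and each positive literal $y_j$ by the atomic event $C=y_j^*$, where $y_j^*$ is a summation dummy (negations become $\neg$ of atomic events). Under the enforced deterministic regime every atomic event of $\delta_\varphi$ collapses to a Boolean constant independent of the distribution -- $E_i=1$ holds iff $e_i^*=1$, and $C=y_j^*$ holds iff $y_j^*=1$ -- so $\PP{\delta_\varphi(\bE,C,\by^*)}\in\{0,1\}$ and equals the indicator $[\varphi(\be^*,\by^*)]$ whenever $\by^*\in\{0,1\}^m$. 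To neutralise dummy values outside $\{0,1\}$ (relevant only when $|\mathit{Val}|>2$) I guard by the filter $\mathit{flt}(\by^*) := \bigwedge_j(C=y_j^*\vee D=y_j^*)$, which evaluates to $1$ iff $\by^*\in\{0,1\}^m$. The summation $\sum_{\by^*}\PP{\delta_\varphi\wedge\mathit{flt}}$ then gives exactly the count $|\{\by\in\{0,1\}^m : \varphi(\be^*,\by)\}|$, obtained as a summation of basic terms.

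The threshold $2^{m-1}$ is produced by the same mechanism: $\sum_{\by^*}\PP{(C=y_1^*)\wedge\mathit{flt}} = |\{\by^*\in\{0,1\}^m : y_1^*=1\}|=2^{m-1}$ in any satisfying model. The final formula is then the conjunction of the deterministic constraints with the single inequality $\sum_{\by^*}\PP{(C=y_1^*)\wedge\mathit{flt}}\le\sum_{\by^*}\PP{\delta_\varphi\wedge\mathit{flt}}$; by construction it lies in $\Lprobcompsum$ (only basic terms, Boolean combinations, and summations -- no additions or products are ever needed), has size polynomial in $|\varphi|+n+m$, and is satisfiable iff the input E-MAJSAT instance is a yes-instance. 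I expect the main obstacle to be isolating and articulating the core principle -- that a basic $\PP{\cdot}$-term whose atomic events refer only to forced-deterministic variables collapses to a distribution-independent $\{0,1\}$-indicator -- for this is precisely what, combined with outer summation, turns basic-term probabilistic satisfiability into pure counting and drives the jump from $\NP$ to $\NP^{\ccPP}$.
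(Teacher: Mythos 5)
Your reduction is correct, and it starts from the same source problem (E-MAJSAT) and relies on the same central engine as the paper: an outer summation over dummy variables that turns a basic term $\PP{\delta}$ into a count of satisfying assignments of the inner Boolean formula. The genuine divergence is in how the existential block is handled. The paper leaves the witness variables $X_1,\dots,X_n$ completely unconstrained, translates the literal $x_i$ to the event $X_i=0$, and observes via Fact~\ref{fact:linear:prob:logic:separation} that the left-hand side becomes the convex combination $\sum_{\hat{\bx}} p_{\hat{\bx}}\,\#_y(\phi(\hat{\bx}))$; correctness then follows from an averaging argument (a weighted mean at least $2^{m-1}$ forces some $\hat{\bx}$ whose count is at least $2^{m-1}$, and conversely a point mass on the maximizer suffices), so a single inequality with no side constraints does the job. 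You instead pin the witness down syntactically: the disjunctions $\PP{E_i=0}=1\vee\PP{E_i=1}=1$ force each $E_i$ to be almost surely constant, and the auxiliary constants $C\equiv 1$, $D\equiv 0$ together with the filter $\mathit{flt}$ let you both read off the dummy bits and restrict them to $\{0,1\}$. This costs extra conjuncts and the determinism machinery, but it makes the model-to-witness correspondence a direct bijection (no convexity step needed) and it handles domains with $|\mathit{Val}|>2$ explicitly — the paper simply assumes a binary domain w.l.o.g.\ and encodes the threshold as $\sum_{x_1}\cdots\sum_{x_{m-1}}\PP{\top}$, which only equals $2^{m-1}$ when $|\mathit{Val}|=2$, whereas your threshold term $\sum_{\by^*}\PP{(C=y_1^*)\wedge\mathit{flt}}$ is domain-independent. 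Both arguments stay within basic terms, Boolean combinations, and summation, as required for $\SATprobcompsum$; the paper's version buys brevity, yours buys robustness and a cleaner separation between the $\exists$-part (encoded in the model) and the $\ccPP$-part (encoded in the sum).
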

\begin{proof}
The canonical \NP$^{\ccPP}$-complete problem E-MajSat is deciding the satisfiability of a formula $\psi: \exists x_1\ldots x_n: \# y_{1}\ldots y_{n} \phi \geq 2^{n-1} $, i.e. deciding whether the Boolean formula $\phi$ has a majority of satisfying assignments to Boolean variables $y_i$ after choosing Boolean variables $x_i$ existentially \citep{littman1998computational}.
We will reduce this to $\SATprobcompsum$.  Let there be $2n$ random variables $X_1,\ldots,X_n,Y_1,\ldots,Y_n$ associated with the Boolean variables. Assume w.l.o.g. that these random variables have domain $\{0,1\}$. Let $\phi'$ be the formula $\phi$ after replacing Boolean variable $x_i$ with $X_i = 0$ and $y_i$ with $Y_i = y_i$.

Consider the probabilistic inequality  $\varphi: \sum_{y_1}\ldots\sum_{y_n} \PP{\phi'} \geq 2^{n-1} $ whereby  $2^{n-1}$ is encoded as $\sum_{x_1}\ldots \sum_{x_{n-1}} \PP{\top}$.
The left hand side equals
$$
\mbox{$\sum_{\hat{x}_1} \ldots \sum_{\hat{x}_n} \sum_{\hat{y}_1} \ldots \sum_{\hat{y}_n}  p_{\hat{x}_1\ldots \hat{x}_n,\hat{y}_1\ldots \hat{y}_n} \sum_{y_{1}}\ldots\sum_{y_{n}} \delta_{\hat{x}_1\ldots \hat{x}_n,\hat{y}_1\ldots \hat{y}_n}(y_1,\ldots,y_n)$}
$$ 
according to Fact~\ref{fact:linear:prob:logic:separation}.
Since the last sum ranges over all values of $y_i$, it counts the number of satisfying assignments to $\phi$ given $\hat{x}_i$. Writing this count as $\#_y(\phi(\hat{x}_1\ldots \hat{x}_n))$, the expression becomes: 
$\sum_{\hat{x}_1} \ldots \sum_{\hat{x}_n} \sum_{\hat{y}_1} \ldots \sum_{\hat{y}_n}  p_{\hat{x}_1\ldots \hat{x}_n,\hat{y}_1\ldots \hat{y}_n} \#_y(\phi(\hat{x}_1\ldots \hat{x}_n)) $.

Since $\#_y(\phi(\hat{x}_1\ldots \hat{x}_n))$ does not depend on $\hat{y}$, we can write the expression as
$\sum_{\hat{x}_1} \ldots \sum_{\hat{x}_n} p_{\hat{x}_1\ldots \hat{x}_n} \#_y(\phi(\hat{x}_1\ldots \hat{x}_n))$ whereby $p_{\hat{x}_1\ldots \hat{x}_n} = \sum_{\hat{y}_1} \ldots \sum_{\hat{y}_n}  p_{\hat{x}_1\ldots \hat{x}_n,\hat{y}_1\ldots \hat{y}_n}$.


If $\psi$ is satisfiable, there is an assignment $\hat{x}_1,\ldots,\hat{x}_n$ with $\#_y(\phi(\hat{x}_1\ldots \hat{x}_n)) \geq 2^{n-1}$. If we set $p_{\hat{x}_1\ldots \hat{x}_n} = 1$ and every other probability $p_{\hat{x}'_1\ldots \hat{x}'_n} = 0$, $\varphi$ is satisfied.

If $\varphi$ is satisfiable, let $x^{max}_1,\ldots,x^{max}_n$ be the assignment that maximizes $ \#_y(\phi(x^{max}_1\ldots x^{max}_n))$. Then $2^{n-1} \leq \sum_{\hat{x}_1} \ldots \sum_{\hat{x}_n} p_{\hat{x}_1\ldots \hat{x}_n} \#_y(\phi(\hat{x}_1\ldots \hat{x}_n)) \leq \sum_{\hat{x}_1} \ldots \sum_{\hat{x}_n} p_{\hat{x}_1\ldots \hat{x}_n} \#_y(\phi(x^{max}_1\ldots x^{max}_n))  = \left(\sum_{\hat{x}_1} \ldots \sum_{\hat{x}_n} p_{\hat{x}_1\ldots \hat{x}_n}\right) \#_y(\phi(x^{max}_1\ldots x^{max}_n)) = \#_y(\phi(x^{max}_1\ldots x^{max}_n)) $. Hence $\psi$ is satisfied for $x^{max}_1\ldots x^{max}_n$.
\end{proof}

\begin{theorem}\label{thm:main:sat:prob:causal:booleanclasses:prob}
For probabilistic reasoning, the satisfiability problems $\SATprobcompsum$ and $\SATproblinsum$, for the basic and linear languages respectively, are \NP$^{\ccPP}$-complete.
\end{theorem}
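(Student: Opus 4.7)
The $\NP^{\ccPP}$ lower bound for $\SATprobcompsum$ is Lemma~\ref{lem:prob:lin:sum:NP:PP:hard}, and the same reduction establishes hardness of $\SATproblinsum$ because every basic term is already a linear term. Thus only the $\NP^{\ccPP}$ upper bound remains, and it is enough to establish it for the richer language $\Lproblinsum$.

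My plan is to combine a small-model theorem with a $\ccPP$-oracle verification step, adapting the linear-programming argument of \cite{fagin1990logic} so that it can absorb the summation operator. Let $\varphi \in \Lproblinsum$ mention the variables $X_1,\ldots,X_n$, and introduce unknowns $p_A$ for each joint atom $A = A_{\hat{x}_1 \ldots \hat{x}_n}$, viewed as the $c^n$ variables of a (possibly exponentially wide) LP. By Fact~\ref{fact:linear:prob:logic:separation}, every term $\sum_{x_{i_1}}\ldots\sum_{x_{i_l}}\PP{\delta}$ rewrites as a linear expression $\sum_A p_A\, k_A$ with nonnegative integer coefficients $k_A \in \{0,\ldots,c^l\}$ of polynomial bit-length. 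Nondeterministically guessing, for each of the polynomially many atomic inequalities of $\varphi$, whether it should hold, reduces $\varphi$ to a system of polynomially many linear inequalities in the $p_A$ augmented by $p_A \geq 0$ and $\sum_A p_A = 1$. A Carath\'eodory-style basic-feasible-solution argument then shows that any satisfiable instance admits a satisfying distribution whose support has polynomial size and whose probabilities are rationals of polynomial bit-length.

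The decision procedure therefore nondeterministically guesses (i) a truth assignment to the atomic inequalities making the Boolean skeleton of $\varphi$ true, and (ii) a small-support rational distribution as above. It only remains to verify, for each atomic inequality, that it indeed holds (or fails) under the guessed distribution. Invoking Fact~\ref{fact:linear:prob:logic:separation} once more, each such verification reduces to deciding an inequality $\sum_A p_A\, k_A \leq \sum_A p_A\, k'_A$, where the $k_A,k'_A$ count satisfying assignments of explicit propositional formulas. Clearing a common denominator, this is a comparison between two weighted sums of $\#P$-values with polynomial-size integer weights; using the standard padding trick that turns $a\cdot\#\phi$ into $\#\Phi$ for an enlarged formula $\Phi$, it collapses to a single $\ccPP$ query. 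Performing one such query per atomic inequality and then evaluating the Boolean combination on the guessed truth values places the entire algorithm in $\NP^{\ccPP}$.

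The main obstacle is the small-model theorem in the presence of summation: the expanded LP has $c^n$ variables and coefficients $k_A$ of magnitude up to $c^l \le c^n$, so one must carefully verify that the bit-complexity of basic feasible solutions depends polynomially only on the \emph{bit-length} of the coefficients (which is polynomial in $|\varphi|$) and not on the number of variables or the magnitudes of the coefficients. Once this is secured, the $\ccPP$-oracle verification step is essentially routine $\#P$-padding.
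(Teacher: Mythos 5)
Your proposal is correct and follows essentially the same route as the paper's proof: hardness via the E-MajSat reduction of Lemma~\ref{lem:prob:lin:sum:NP:PP:hard}, and membership by combining Fact~\ref{fact:linear:prob:logic:separation} with the small-model/basic-feasible-solution argument of \cite{fagin1990logic} to guess a polynomial-size support, then delegating the counting sums to the $\ccPP$ oracle. The only (immaterial) difference is that you package each atomic inequality as a single $\ccPP$ query via $\#P$-padding, whereas the paper evaluates each count $\sum_{x_{i_1}}\ldots\sum_{x_{i_l}}\delta_{\hat{x}_1\ldots\hat{x}_n}(x_{i_1},\ldots,x_{i_l})$ with the oracle and finishes the arithmetic directly.
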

\begin{proofidea}
Having  Lemma~\ref{lem:prob:lin:sum:NP:PP:hard}, it remains to prove that the problem 
is in \NP$^{\ccPP}$. To this aim, we show that any satisfiable instance 
has a solution of polynomial size: we rewrite the sums $\sum_{x_{i_1}}\ldots\sum_{x_{i_l}} \PP{\delta}$
in the expressions according to Fact~\ref{fact:linear:prob:logic:separation} which allows 
to encode the instance as a system of $m$ linear equations with unknown coefficients
$p_{\hat{x}_1\ldots \hat{x}_n}$, where $m$ is bounded by the instance size.
Such a system has a non-negative solution with at most $m$ entries positive 
of polynomial size. Then,  we  guess non-deterministically such solutions and verify its correctness
estimating the values $\sum_{x_{i_1}}...\sum_{x_{i_l}} \delta_{\hat{x}_1 \hat{x}_n}(x_{i_1},..,x_{i_l}) $
using the ${\ccPP}$ oracle.
A full proof, as for all further proof ideas, can be found in the appendix.
\end{proofidea}

\begin{remark}
    $\NP^{\ccPP}$ is the class describing the complexity of another, relevant 
    primitive of the probabilistic reasoning which consists in finding the Maximum 
    a Posteriori Hypothesis (MAP). To study its computational complexity, 
    the corresponding decision problem is defined which asks if for a given Bayesian 
    network $\cB=(\cG,P_{\cB})$,  where probability $P_{\cB}$ factorizes 
    according to the structure of network $\cG$, a rational number $\tau$, evidence $\be$, and 
    some subset of variables $\bQ$, there is an instantiation $\bq$ to $\bQ$ such 
    that $P_{\cB}(\bq,\be) = \sum_{\by} P_{\cB}(\bq,\by,\be)>\tau$.   
    It is well known that the problem is \NP$^{\ccPP}$-complete \citep{roth1996hardness,park2004complexity}.
\end{remark}

Finally, we  draw our attention to the impact of negation on the complexity of reasoning for such languages of the probabilistic layer.
The hardness of $\SATproblinsum$ depends on negations in the Boolean formulas $\delta$ of basic terms $\PP{\delta}$, which make it difficult to count all satisfying assignments. Without negations, marginalization just removes variables, e.g. $\sum_x  \PP{X=x\wedge Y=y}$ becomes $\PP{Y=y}$. 
This observation leads to the following:

\begin{proposition}\label{proposition:lin:no:negations:in:post-int}
$\SATprobcompsum$ and $\SATproblinsum$  are \NP-complete if the primitives in $\Epint$ 
are restricted to not contain negations.
\end{proposition}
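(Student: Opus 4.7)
My plan is to split the proof into NP-hardness and NP-membership. Hardness follows from a direct reduction from $3$-\textsc{Sat}, while membership rests on the small-support property derived in the proof of Theorem~\ref{thm:main:sat:prob:causal:booleanclasses:prob} combined with a polynomial-time evaluation of the marginalizing sums, which becomes possible precisely because $\delta$ has no negations.

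For NP-hardness, I would reduce from $3$-\textsc{Sat}. Given $\phi=\bigwedge_{j}C_{j}$ over Boolean variables $z_{1},\ldots,z_{n}$, I introduce binary random variables $X_{1},\ldots,X_{n}$ and, for each clause $C_{j}=\ell_{j,1}\vee\ell_{j,2}\vee\ell_{j,3}$, form the conjunction $\delta_{j}=\bigwedge_{k=1}^{3}(X_{i_{j,k}}=\bar v_{j,k})$, where $\bar v_{j,k}\in\{0,1\}$ is the value falsifying $\ell_{j,k}$. Each $\delta_{j}$ is a conjunction of atomic events with no negations, so the instance $\bigwedge_{j}\PP{\delta_{j}}=0$ lives in the negation-free fragment of $\SATprobcomp$. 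A model satisfies this system iff every positive-probability world satisfies every clause of $\phi$, which happens iff $\phi$ is satisfiable. The same reduction transfers NP-hardness to $\SATprobcompsum$ and $\SATproblinsum$.

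For NP-membership, I would reuse the small-support argument from Theorem~\ref{thm:main:sat:prob:causal:booleanclasses:prob}: any satisfiable instance admits a model whose joint distribution is supported on polynomially many atoms $\hat{x}^{(1)},\ldots,\hat{x}^{(m)}$ with polynomial-bit probabilities $p_{1},\ldots,p_{m}$. The NP algorithm guesses this support and the probabilities. For each summed basic term it then invokes Fact~\ref{fact:linear:prob:logic:separation} to rewrite $\sum_{x_{i_{1}}}\cdots\sum_{x_{i_{l}}}\PP{\delta}$ as $\sum_{j=1}^{m}p_{j}N_{j}$, where $N_{j}$ counts the dummy assignments $(x_{i_{1}},\ldots,x_{i_{l}})$ for which the implication from $\hat{x}^{(j)}$ to $\delta$ is a tautology. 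Linear terms decompose to basic ones by linearity, so once every $N_{j}$ is available the full Boolean combination of (in)equalities can be checked in polynomial time.

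The step that genuinely uses the negation-free restriction, and which I expect to be the main obstacle in a formal write-up, is the polynomial-time computation of $N_{j}$. For a conjunction $\delta$, each dummy $x_{i_{r}}$ induces a ``class'' of random variables forced to share its value, while constant atoms fix the values of individual variables. Given $\hat{x}^{(j)}$, the implication is a tautology for some dummy assignment iff every constant atom agrees with $\hat{x}^{(j)}$ and, for every nonempty dummy class, all its variables already share a common value under $\hat{x}^{(j)}$. When this consistency test passes, the active dummies are pinned to those common values and the $K_{\mathrm{free}}$ dummies not appearing in $\delta$ remain free, giving $N_{j}=\maxvaluecount^{K_{\mathrm{free}}}$; otherwise $N_{j}=0$. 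Either branch is decidable in polynomial time from $\delta$ and $\hat{x}^{(j)}$, completing the NP algorithm. If negations (and the disjunctions they induce) were permitted in $\delta$, the substituted formula $\delta_{\hat{x}^{(j)}}$ could encode an arbitrary Boolean formula in the dummies and $N_{j}$ would turn into a \#\textsc{Sat}-style count, exactly the feature that forces the $\ccPP$ oracle in the general Theorem~\ref{thm:main:sat:prob:causal:booleanclasses:prob}.
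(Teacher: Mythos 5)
Your proof is correct and the NP-membership half is essentially the paper's own argument: guess a polynomial-size support via the small-model property, rewrite each sum as $\sum_j p_j N_j$ via Fact~\ref{fact:linear:prob:logic:separation}, and observe that for a negation-free (hence purely conjunctive) $\delta$ the count $N_j$ collapses to either $0$ or $\maxvaluecount^{K_{\mathrm{free}}}$ after a consistency check, eliminating the $\ccPP$ oracle. The only genuine divergence is in the hardness direction: the paper keeps the primitives trivial (atoms of the form $P(X)>0$) and encodes the 3-\textsc{Sat} clause structure in the Boolean combination of inequalities, whereas you encode each clause as a forbidden negation-free conjunction $\delta_j$ and assert $\PP{\delta_j}=0$, pushing the logical structure into the events themselves. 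Both reductions are valid within the negation-free fragment (your union-bound argument for the converse direction is sound); yours has the mild aesthetic advantage of using only a conjunction of equalities at the formula level, while the paper's makes it immediate that hardness persists even when the events are single atoms.
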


\subsection{The Interventional (Causal) Level}\label{sec:increasing:causal}

In causal formulas, one can perform interventions to set the value of a variable, which can recursively affect the value of all endogenous variables that depend on the intervened variable. This naturally corresponds to the choice of a variable value by an existential or universal quantifier in a Boolean formula, since in a Boolean formula with multiple quantifiers,  the value chosen by each quantifier can depend on the values chosen by earlier quantifiers.
Thus, an interventional equation can encode a quantified Boolean formula. This makes $\SATinterventlinsum$ \PSPACE{}-hard. As we will show below, it is even \PSPACE{}-complete.

\begin{lemma}\label{lem:interventional:lin:sum:PSPACE:hard}
    $\SATinterventcompsum$ is $\PSPACE$ hard.
\end{lemma}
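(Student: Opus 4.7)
The plan is to prove $\PSPACE$-hardness by a polynomial-time reduction from \textsc{Tqbf}, the canonical $\PSPACE$-complete problem. Given a closed QBF $\Phi = Q_1 x_1 \cdots Q_n x_n\,\phi(x_1,\ldots,x_n)$ with alternating quantifiers (w.l.o.g.), I will construct a formula $\varphi \in \Linterventcompsum$ such that $\Phi$ is true iff $\varphi$ is satisfiable.

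The guiding intuition, echoing the lemma's preamble, is that the summation operator $\sum_{x_i}$ combined with the intervention $[X_i = x_i]$ implements a universal quantifier over $x_i$, while the SCM's freedom to pick an arbitrary mechanism for a non-intervened variable implements an existential quantifier whose witness can depend on the intervened (``earlier'') values via the mechanism's parents. I would introduce binary random variables $X_1,\ldots,X_n$, one per QBF variable, and take the core of $\varphi$ to be
\[
\sum_{x_i : Q_i=\forall} P\!\left(\Big[\textstyle\bigwedge_{i:\,Q_i=\forall} X_i = x_i\Big]\ \phi(X_1,\ldots,X_n)\right) \;=\; 2^{|\{i\,:\,Q_i=\forall\}|}.
\]
Since each summand is a probability in $[0,1]$ and the total saturates $2^{|\forall|}$, this equality forces every summand to equal $1$, i.e.\ $\phi$ must hold almost surely under every universal-intervention pattern. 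In this pattern, the values of the existential $X_j$'s are produced by their SCM mechanisms, which after intervention depend on the intervened (parent) values.

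The principal subtlety is that, as stated, the adversary's SCM can let an existential mechanism $F_{X_j}$ peek at \emph{later} universals --- they are also intervened on, hence visible to $F_{X_j}$ if its parent set includes them --- giving the adversary strictly more power than the QBF semantics permit. I would close this gap by augmenting $\varphi$ with ``structural invariance'' constraints of the form $P([\ldots,X_k{=}0]\,X_j{=}v) = P([\ldots,X_k{=}1]\,X_j{=}v)$ for each existential $X_j$, each later universal $X_k$, and each setting of the earlier universals. These force the intervened distribution of $X_j$ to be unaffected by later universals, and hence (after pushing any exogenous randomness for $F_{X_j}$ into auxiliary universal-style variables) force the mechanism of $X_j$ to be a function of earlier universals only. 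Forward direction is then immediate: if $\Phi$ is true, take Skolem functions as the mechanisms of the existential $X_j$'s. Backward: any satisfying SCM satisfies the invariances and thereby yields a valid Skolem strategy witnessing $\Phi$.

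The hardest step is encoding the family of invariance constraints compactly: naively there are $2^{|\text{prior universals}|}$ pointwise equalities per $(j,k)$ pair, whereas $\varphi$ must have polynomial size. I would exploit summation together with the nonnegativity bound $P(\cdot)\in[0,1]$: by asserting both $\sum_{\vec x_{\mathrm{prior}}}P([\vec x_{\mathrm{prior}},X_k{=}0]X_j{=}v) = \sum_{\vec x_{\mathrm{prior}}}P([\vec x_{\mathrm{prior}},X_k{=}1]X_j{=}v)$ and its analogue with the two probabilities swapped on both sides (together with matching absolute-value gadgets built from two inequalities), one can force each pointwise difference to vanish in a single polynomial-size statement. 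Getting this compression right is the technical heart of the proof; the rest (polynomial time, formula membership in $\Linterventcompsum$, and the two directions of correctness) is routine once the invariance gadgets are in place.
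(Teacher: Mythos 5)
Your reduction source (QBF) and your main constraint $\sum_{\by}\PP{[\by]\psi'}=2^{|\forall|}$ are exactly the paper's, and you correctly identify the crucial obstacle: nothing stops the mechanism of an existential variable from depending on \emph{later} universals. However, the way you propose to close that gap does not work, for two independent reasons. First, the compression of the exponentially many pointwise invariance constraints is not achievable in $\Linterventcompsum$ (or even in the linear fragment): the summation operator applies to \emph{terms}, i.e.\ to (sums of) single primitives $\PP{[\alpha]\delta}$, while Boolean connectives, subtraction and any ``absolute-value gadget'' live only at the formula level, outside all sums. Asserting equality of the two aggregated sums only forces $\sum_{\vec x_p}\bigl(a_{\vec x_p}-b_{\vec x_p}\bigr)=0$, and the signed differences can cancel; there is no expressible term whose saturation at an extreme value would force each difference to vanish individually. (The paper's Layer-3 proof gets around this by putting the comparison of two interventional worlds \emph{inside} a single counterfactual event, which is precisely what Layer 2 cannot do.) Second, even if you could enforce all the pointwise distributional invariances, they are semantically too weak: they constrain marginal distributions averaged over $\bU$, not the per-$\bu$ functional dependence. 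For the false QBF $\exists x_1\exists x_2\,\forall y\,(x_1\oplus x_2=y)$, the SCM with uniform $U$, $F_{X_1}=U$ and $F_{X_2}=Y\oplus U$ satisfies your main constraint with every summand equal to $1$, and the distribution of each $X_i$ under $[Y=0]$ and $[Y=1]$ is identical, so every invariance constraint holds --- yet no legitimate Skolem strategy exists. Your parenthetical about ``pushing exogenous randomness into auxiliary universal-style variables'' is exactly where this breaks, and it is not a routine step.

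The paper closes the gap differently: it first forces a \emph{causal order} $X_1\prec\cdots\prec X_n$ matching the quantifier order, using polynomially many constraints of the form $\PP{[C=1,V_{i_{j-1}}=k]\,V_{i_j}=k}=1$ (with a fresh switch variable $C$ and $[C=0]$ added to all original primitives). Because SCMs are recursive, making each $V_{i_j}$ provably react to an intervention on $V_{i_{j-1}}$ pins down the unique admissible topological order, so an existential mechanism structurally \emph{cannot} read any later variable --- no invariance constraints and no derandomization argument are needed. If you want to complete your proof, you should replace your invariance machinery with an ordering gadget of this kind; as written, the ``technical heart'' of your argument is a genuine gap.
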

\begin{proof}
    We reduce from the canonical \PSPACE-complete problem QBF.
    Let $Q_1x_1\, Q_2x_2\, \cdots\, Q_nx_n \psi$ be a quantified Boolean formula with arbitrary quantifiers $Q_1, \ldots, Q_n \in \{\exists, \forall\}$.
    We introduce Boolean random variables $\bX = \{X_1, \ldots, X_n\}$ to represent the values of the variables and denote by $\bY = \{Y_1, \ldots Y_k\} \subseteq \bX$ the universally quantified variables.
    By Lemma~\ref{lemm:causal:ordering}, we can enforce an ordering $X_1 \prec X_2 \prec \ldots \prec X_n$ of variables, i.e.\ variable $X_i$ can only depend on variables $X_j$ with $j < i$.
    Our only further constraint is
    \begin{align}
        \mbox{$\sum_{\by}\PP{[\by]\psi'}$} &= 2^k \label{eq:intervent:lin:sum:pspace:hard:main}
    \end{align}
    where $\psi'$ is obtained from $\psi$ be replacing positive literals $x_i$ by $X_i = 1$ and negative literals $\overline{x_i}$ by $X_i = 0$.

    Suppose the constructed $\SATinterventlinsum$ formula is satisfied by a model $\fM$.
    We show that $Q_1x_1\, Q_2x_2\, \cdots\, Q_nx_n \psi$ is satisfiable.
    Each probability implicitly sums over all possible values $\bu$ of the exogenous variables.
    Fix one such $\bu$ with positive probability.
    Combined with $X_1 \prec \ldots \prec X_n$ this implies all random variables now deterministically depend only on any of the previous variables.
    Equation~(\ref{eq:intervent:lin:sum:pspace:hard:main}) enforces $\PP{[\by]\psi'} = 1$ for every choice of $\by$ and thus simulates the $\bY$ being universally quantified.
    As the existential variables $x_i$, we then choose the value $x_i$ of $X_i$ which can only depend on $X_j$ with $j<i$.
    The formula $\psi'$ and thus $\psi$ is then satisfied due to $\PP{[\by]\psi'} = 1$.

    On the other hand, suppose $Q_1x_1\, Q_2x_2\, \cdots\, Q_nx_n \psi$ is satisfiable,
    We create a deterministic model $\fM$ as follows:
    The value of existentially quantified variables $X_i$ is then computed by the function $F_i(x_1, \ldots, x_{i-1})$ defined as the existentially chosen value when the previous variables are set to $x_1, \ldots, x_{i-1}$.
    The values of the universally quantified variables do not matter since we intervene on them before every occurrence.
    This satisfies the required order of variables and since $\psi$ is satisfied we have $\PP{[\by]\psi'} = 1$ for every choice of the universally quantified variables $\by$, thus satisfying equation~(\ref{eq:intervent:lin:sum:pspace:hard:main}).
\end{proof}

\begin{theorem}\label{thm:main:sat:prob:causal:booleanclasses:causal}
For causal reasoning, the satisfiability problems $\SATinterventcompsum$ and $\SATinterventlinsum$, for the basic and linear languages respectively, are $\PSPACE$-complete.
\end{theorem}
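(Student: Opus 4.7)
The $\PSPACE$-hardness is immediate: Lemma~\ref{lem:interventional:lin:sum:PSPACE:hard} establishes hardness of $\SATinterventcompsum$, and because $\Linterventcompsum \subseteq \Linterventlinsum$ syntactically, $\SATinterventlinsum$ inherits the same lower bound.

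For the upper bound I plan a nondeterministic polynomial-space procedure (equal to $\PSPACE$ by Savitch's theorem). Without loss of generality I restrict to canonical SCMs: the causal DAG is the complete order $X_1 \prec \cdots \prec X_n$, the exogenous variables are independent ``response-function'' variables $U_i$ with values in $\mathit{Val}^{\mathrm{val}(\Pa_i)}$, and $F_i(\pa_i,u_i) := u_i(\pa_i)$; every interventional distribution is realized by such a canonical model. Now, mimicking the strategy behind Theorem~\ref{thm:main:sat:prob:causal:booleanclasses:prob}, the algorithm first guesses a truth pattern for the Boolean skeleton of $\varphi$, so that the remaining task is to satisfy a conjunction of $m$ linear (in)equalities in the unknowns $P(\bu)$, whose coefficients are $0/1$ indicators $\mathbb{1}[\cF_{\alpha},\bu \models \delta]$, possibly aggregated over values of marginalization variables (yielding integer coefficients that may be exponential in $|\mathit{Val}|$). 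A Carath\'eodory-type basic-feasible-solution argument then shows that if the system is feasible, it admits a non-negative solution supported on at most $m+1$ tuples $\bu^{(1)},\dots,\bu^{(m+1)}$ with polynomial-bit rational weights $w_1,\dots,w_{m+1}$.

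The verifier guesses these weights together with a polynomial-space-accessible description of each $\bu^{(k)}$, and then evaluates $\varphi$ by iterating through its summations via polynomial-depth counters. At each basic term $\PP{[\alpha]\delta}$ encountered, it computes $\sum_k w_k\cdot \mathbb{1}[\cF_{\alpha},\bu^{(k)}\models \delta]$ by deterministically propagating values through $F_1,\dots,F_n$. The main obstacle is that each $\bu^{(k)}$ is an exponentially large object -- a tuple of tables with $|\mathit{Val}|^{|\Pa_i|}$ entries each -- so the support tuples cannot be written down explicitly. I plan to resolve this either by strengthening the small-model theorem to witnesses in which every $F_i$ is a polynomial-size Boolean circuit, or, alternatively, by using alternation to supply response-function values on demand while universally verifying that repeated queries for the same argument return matching answers. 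Either route preserves the polynomial-space budget, thereby granting implicit access to the exponentially many entries and completing the algorithm.
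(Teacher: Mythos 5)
Your hardness direction matches the paper (Lemma~\ref{lem:interventional:lin:sum:PSPACE:hard} plus the syntactic inclusion of the basic into the linear language). For membership, your architecture -- guess the Boolean skeleton, apply a Carath\'eodory/small-support argument to get polynomially many support points $\bu^{(k)}$ with small rational weights, then evaluate the exponential sums by a polynomial-space loop -- is also the shape of the paper's argument, and you correctly isolate the real difficulty: each $\bu^{(k)}$ determines a full deterministic mechanism, an exponentially large object that the evaluation loop must query at exponentially many distinct arguments. But you do not actually overcome this difficulty, and neither of your two escape routes works as stated. Route (a), a small-model theorem in which every $F_i$ is a polynomial-size circuit, is an unproven strengthening: the small-support argument (Lemma 2.5 of Fagin et al., which the paper also invokes) bounds the \emph{number} of support points, not the \emph{description complexity} of the mechanism at each support point, and since a constraint $\sum_{\by}\PP{[\by]\delta}=t$ pins down the mechanism's behaviour under exponentially many interventions simultaneously, it is not clear that circuit-sized witnesses always exist. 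Route (b), ``alternation while preserving the polynomial-space budget,'' does not give what you need: alternating polynomial space equals deterministic exponential time, so this route only yields an $\EXP$ upper bound, while a purely nondeterministic polynomial-space machine cannot cross-check a later query against an earlier guessed answer that it has not stored.

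The paper closes exactly this gap by reorganizing the computation (Algorithm~\ref{alg:interventional:lin:sum:in:PSPACE}, Lemma~\ref{lem:interventional:lin:sum:in:PSPACE}): instead of evaluating $\varphi$ term by term, which requires random access to the mechanism, it guesses a causal order and the small support, and then performs a \emph{single} depth-first enumeration of all intervention contexts respecting that order, guessing each function value exactly once at the unique moment it is first (and only) needed and immediately folding its effect into one running counter per sum. Because every Layer-2 basic term refers to a single intervention, one pass over all interventions suffices to accumulate the exact value of every sum, so no function value ever has to be recalled and no consistency check is required. You would need to supply this reorganization (or an equivalent device) for your membership argument to go through.
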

\begin{proofidea}
After Lemma~\ref{lem:interventional:lin:sum:PSPACE:hard}, we only have to show that $\SATinterventlinsum$ is in \PSPACE{}. For the proof details, see the appendix. 
The basic idea is to evaluate interventions one at a time without storing the entire model because each primitive can only contain one intervention. For each sum, we only need to store the total probability of all its primitives, and increment this probability, if a new intervention satisfies some of the primitives.

As with $\SATproblinsum$, there can only be polynomially many  exogenous variable assignments $\bu$ with non-zero probability $p_\bu$, which are independent of each other and can be guessed non-deterministically. We can also guess a causal order of the endogenous variables, such that variables can only depend on the variables preceding them in the causal order. This causal order allows one to guess the variables affected by any intervention in a sound way.

We enumerate all exogenous variable assignments $\bu$, each having probability $p_\bu$.  Recursively, we can enumerate all possible interventions $\setofinterventions$, which yield the values $\bx$ of the endogenous variables, which---given the exogenous variables---also have probability $p_\bu$. 
Then we count for each sum how many of its primitives are satisfied by $\setofinterventions$ and $\bx$, and increment the accumulated value of the sum by $p_\bu$ for each. 

After this enumeration, we know the numeric value of every sum, and can verify the resulting equation system.
\end{proofidea}

\subsection{The Counterfactual Level}\label{sec:increasing:counter}

On the counterfactual level, one can perform multiple interventions, and thus  compare different functions of the model to each other.  Hence,  the formulas can only be evaluated if the entire, exponential-sized model is known. Thus deciding the satisfiability requires exponential time and non-determinism to find the model, making $\SATcausallinsum$ \NEXP-complete.


\begin{theorem}\label{thm:main:sat:prob:causal:booleanclasses:counter}
For counterfactual reasoning, the satisfiability problems $\SATcausalcompsum$ and $\SATcausallinsum$, for the basic and linear languages respectively, are $\NEXP$-complete.
\end{theorem}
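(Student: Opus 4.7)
For the $\NEXP$-hardness of $\SATcausalcompsum$ (which implies hardness of $\SATcausallinsum$), my plan is to reduce from \textsc{Succinct-3SAT}: given a polynomial-size Boolean circuit $C$ that on input $\bi\in\{0,1\}^n$ outputs a description of the $\bi$-th clause of a 3CNF formula $\phi$ over $2^m$ variables (three (variable index, sign) pairs), decide whether $\phi$ is satisfiable. The essential capability of the counterfactual layer is that a single primitive can reference the same endogenous variable under several distinct interventions, which lets me compactly implement a ``dynamic lookup'' of a truth assignment at exponentially many addresses. I will introduce endogenous Boolean variables $\bV=(V_1,\myldots,V_m)$ for a variable address, a variable $A$ depending on $\bV$ representing the truth assignment, $\bI=(I_1,\myldots,I_n)$ for a clause index, and for every gate $g$ of $C$ a variable $G_g$ depending on $\bI$, with designated output wires $(\bJ^k,S^k)_{k=1}^3$. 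Each gate $g$ is pinned down by
\[
\sum_{\bi}\PP{[\bI=\bi]\,(G_g\leftrightarrow G_{g_1}\circ G_{g_2})} \;=\; 2^n,
\]
where $2^n$ is encoded in the basic language as $\sum_{x_1}\myldots\sum_{x_n}\PP{\top}$ in the style of Lemma~\ref{lem:prob:lin:sum:NP:PP:hard}; since every summand lies in $[0,1]$, equality forces the gate equation in every exogenous realisation and for every $\bi$. Clause satisfaction is encoded by a single counterfactual lookup constraint
\[
\sum_{\bi,\bj^1,s^1,\bj^2,s^2,\bj^3,s^3}\PP{[\bI=\bi]\,\bigwedge_{k}(\bJ^k=\bj^k\wedge S^k=s^k)\,\wedge\,\bigvee_{k}[\bV=\bj^k]A=s^k} \;=\; 2^n.
\]
The counterfactual conjunction of $[\bV=\bj^k]A=s^k$ over $k=1,2,3$ inside one primitive is indispensable. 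For soundness, from any satisfying assignment $\sigma$ of $\phi$, the deterministic SCM with $A(\bv)=\sigma(\bv)$ and gates literally executing $C$ satisfies every constraint; conversely, from any satisfying SCM, fixing an exogenous $\bu$ of positive probability yields a function $\bv\mapsto A_\bu(\bv)$ that satisfies every clause of $\phi$. All expressions are basic probabilities with summations, so the reduction lands in $\SATcausalcompsum$.

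For $\NEXP$-membership of $\SATcausallinsum$ my plan is to give a nondeterministic exponential-time algorithm that guesses an exponential-description witness SCM and verifies the formula directly. Fully expanding every $\sum_x$ produces an exponentially long sum-free Boolean combination of linear (in)equalities between counterfactual primitives. Treating the values of distinct primitives as unknowns and applying a Carath\'eodory/linear-algebra argument analogous to the ones behind $\SATproblinsum$ and $\SATinterventlinsum$, I can restrict attention to SCMs whose exogenous variable has support of size at most the (exponential) number of linear constraints, with each probability having bit-length bounded polynomially in the expanded (hence at most exponentially in the original) formula; each deterministic function $F_i$ is then representable by a truth table on the finite joint domain, of exponential size. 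The algorithm guesses this data, expands every summation, evaluates each counterfactual primitive by iterating over the supports and simulating the interventions on the guessed tables in polynomial time per primitive, and finally checks the (in)equalities, all in total exponential time.

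The principal difficulty is the hardness direction: packing into a polynomial-size formula both (i) the correct evaluation of $C$ at every one of the $2^n$ clause indices and (ii) the use of the circuit's variable-index outputs $\bJ^k$ as intervention targets on the single truth-assignment variable $A$. The combination of a summation over $(\bi,\bj^1,s^1,\myldots,\bj^3,s^3)$ around one counterfactual primitive that simultaneously conjoins the three interventions $[\bV=\bj^k]A=s^k$ and threads the circuit outputs in via the equalities $\bJ^k=\bj^k$ is the decisive construction, and it exploits exactly the expressive feature that separates the counterfactual layer from the interventional one.
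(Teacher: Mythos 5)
Your proposal is correct, and the hardness direction takes a genuinely different route from the paper. The paper reduces from satisfiability of Sch\"onfinkel--Bernays (EPR) sentences $\exists \bx\,\forall\by\,\psi$: it introduces a random variable $R_i$ for each relation plus a copy $R_i^j$ for each occurrence, and then must explicitly \emph{police} functional dependence with counterfactual constraints of the form $\sum_{\bv}\PP{[\text{args}]R_i \neqinPP [\bv\setminus r_i]R_i}=0$, a consistency constraint tying each occurrence copy to its master relation, and a further constraint preventing the existential variables from reacting to the universal ones; the universal quantifier is simulated by $\sum_{\by}\PP{[\by]\psi'}=2^n$. Your reduction from \textsc{Succinct-3SAT} reaches the same place with less machinery: because a single lookup variable $A$ queried under interventions $[\bV=\bj]$ is \emph{automatically} a deterministic function of $(\bj,\bu)$, you need no dependence-policing or occurrence-copy constraints at all, and the circuit evaluation is enforced by purely interventional gate equations; the only genuinely counterfactual step is the clause constraint that conjoins $[\bI=\bi](\cdots)$ with the three lookups $[\bV=\bj^k]A=s^k$ in one primitive. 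One step you should make explicit in a full write-up is the counting argument for that constraint: its summands number $2^{n+3m+3}$, so the bound $2^n$ is not the trivial ``each summand is at most $1$'' bound but follows because, for each fixed $\bu$ and $\bi$, exactly one tuple $(\bj^1,s^1,\ldots,\bj^3,s^3)$ can satisfy the first conjunct; equality then forces the disjunction of lookups for that unique tuple, i.e.\ satisfaction of clause $\bi$ by $\sigma_\bu$. For membership both arguments expand the sums to an exponential-size sum-free instance; the paper then simply invokes the known $\NP$ algorithm for $\SATcausallin$ of Moss\'e et al., whereas you re-derive a small-model property (support and truth tables of exponential size) and verify directly --- slightly more work, but equally valid, and it makes the witness explicit.
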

\begin{proofidea}
    This \NEXP-hardness follows from a reduction from the $\NEXP$-complete problem of checking satisfiability of a Schönfinkel-Bernays sentence to the satisfiability of $\SATcausalcompsum$.
    These are first-order logic formulas of the form $\exists \bx  \forall \by \psi$ where $\psi$ cannot contain any quantifiers or functions.
    The $\bx$ and $\by$ are encoded as Boolean random variables, where the existentially quantified $\bx$ are encoded into the existence of a satisfying SCM, while the universally quantified $\by$ are encoded by marginalization, i.e.\ a condition $\sum_{\by} \PP{[\by] \psi' } = 2^n$ for some formula $\psi'$ derived from $\psi$ and where $n$ denotes the number of variables $\by$.
    The counterfactuals now allow us to ensure that the random variables $R_i$ representing the relations within $\psi$ deterministically depend on their respective inputs by comparing whether $R_i$ changes between an intervention on all variables (except $R_i$) versus an intervention on only its dependencies.
    Marginalization finally allows us to combine these exponentially many checks for all possible values of all variables into a single equation.

    The containment in $\NEXP$ follows from expanding the formulas with sums to exponentially larger formulas without sums.
\end{proofidea}


\section{The Complexity of Satisfiability for Polynomial Languages with Marginalization}
\label{sec:succR:compl:satsumpoly}
\Citet{zander2023ijcai} prove that $\SATprobpolysum$ and $\SATinterventpolysum$ 
are complete for $\succR$ whenever the basic terms are allowed to also contain conditional probabilities.
They, however, leave open the exact complexity status of $\SATcausalpolysum$.
Here we show that the problem is in $\succR$, with and without conditional probabilities, which proves its $\succR$-completeness. 
\begin{theorem}\label{thm:satcauslfull:in:real:nexp}
    $\SATcausalpolysum\in \succR$.
    This also holds true if we allow the basic terms to contain conditional probabilities.
\end{theorem}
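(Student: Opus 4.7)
I would reduce $\SATcausalpolysum$ to $\succETR$ in polynomial time, producing from $\varphi$ a polynomial-size Boolean circuit $C$ that describes an exponentially large $\ETR$ sentence $\Phi$ equisatisfiable with $\varphi$, thereby extending the ideas of \citet{zander2023ijcai} for $\SATprobpolysum$ and $\SATinterventpolysum$ to the counterfactual layer. The guiding idea is that any satisfying SCM can be put into a normal form whose exogenous variable $\bU$ takes values in $[K]$ for some $K=2^{\text{poly}(|\varphi|)}$, with mechanisms $F_i:\mathit{Val}^{n}\times[K]\to\mathit{Val}$ stored as (exponentially long) lookup tables; WLOG the endogenous variables are ordered as $X_1\prec\cdots\prec X_n$ with $\Pa_i=\{X_1,\dots,X_{i-1}\}$. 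The bound $K=2^{\text{poly}(|\varphi|)}$ is justified by a Carath\'eodory-style argument on the exponentially many distinct basic terms $\PP{\delta}$ appearing after syntactically unfolding every summation operator in $\varphi$: the achievable observable vector lives in a convex set of singly-exponential dimension, so WLOG the witness distribution is supported on at most $K=2^{\text{poly}(|\varphi|)}$ deterministic counterfactual models. Satisfiability of $\varphi$ is therefore equivalent to the existence of table entries, a probability vector $(p_u)_{u\in[K]}$, and induced counterfactual-world values that satisfy the polynomial (in)equalities arising from the unfolded formula.

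\textbf{The sentence $\Phi$.} The ETR sentence $\Phi$ would have three families of real variables, each indexed by a polynomial-length bitstring so that only $2^{\text{poly}(|\varphi|)}$ of them exist: probabilities $p_u$; mechanism-table entries $f_{i,\vec a,u}$; and, for every intervention $\alpha$ that arises after unfolding the summation operators in $\varphi$, auxiliary values $g^{\alpha}_{i,u}\in\mathit{Val}$ giving the value of $X_i$ in world $\alpha$ under unit $u$. A further auxiliary $q_t$ is attached to every basic term $t=\PP{\delta}$ appearing in the unfolded formula. The constraints of $\Phi$ then express: the simplex conditions $\sum_u p_u=1$ and $p_u\ge 0$; integrality of the $f$- and $g$-values via gadgets of the form $\prod_{v=0}^{c-1}(y-v)=0$; the recursion defining $g^{\alpha}_{i,u}$, which equals $\alpha_i$ if $\alpha$ fixes $X_i$ and otherwise reads $f_{i,\vec a,u}$ at the address given by the $g$-values of the parents under $\alpha$, encoded via Lagrange-style indicator polynomials of degree $O(nc)$; linear identities $q_{\PP{\delta}}=\sum_u p_u\cdot \llbracket\delta\rrbracket_{u}$, where the right-hand side is a polynomial in the $g$-variables encoding the Boolean test $u\models\delta$; and finally the Boolean combination of polynomial (in)equalities from $\varphi$ rewritten using the $q_t$. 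A standard construction then yields, in polynomial time from $\varphi$, a Boolean circuit $C$ that, on an input bitstring identifying a node of the syntax tree of $\Phi$, outputs its label, parent, and children, matching the definition of $\succETR$. Conditional probabilities inside basic terms are absorbed via the denominator-clearing trick of \citet{zander2023ijcai}: introduce a fresh variable for each $\PP{\delta}$, assert it is strictly positive, and multiply through so as to keep the constraints polynomial.

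\textbf{Main obstacle.} I expect the most delicate step to be the circuit-level generation of the $g^{\alpha}_{i,u}$ recursion, for two reasons. First, the ``address-and-read'' step on the mechanism table is not a primitive ETR operation and must be emulated by a degree-$O(nc)$ polynomial in the $g$-variables; the circuit $C$ must therefore emit essentially identical polynomial templates for exponentially many triples $(i,\alpha,u)$, with only the bitstring indices changing. Second, the intervention $\alpha$ itself is often produced by unfolding a sum such as $\sum_{x_1}\cdots\sum_{x_k}\PP{[X_{i_1}{=}x_1,\dots,X_{i_k}{=}x_k]\delta}$, so the intervention constants it hard-codes are functions of the bitstring naming the current node of $\Phi$; the circuit must parse these constants out of the bitstring and route them into the correct coordinate of the $g$-recursion. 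Once this bookkeeping is in place, correctness is a two-sided check: any SCM satisfying $\varphi$ gives real assignments making $\Phi$ true by construction, and conversely any real solution of $\Phi$ reads back as a normal-form SCM whose induced counterfactual probabilities make every expanded sum in $\varphi$ evaluate, by construction, to the matching $q_t$, thus satisfying~$\varphi$.
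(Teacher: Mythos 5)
Your plan is essentially correct, but it takes a genuinely different route from the paper's proof. You build an explicit polynomial-time many-one reduction to $\succETR$: a Carath\'eodory-style small-support argument caps the witness SCM at $2^{\poly}$ deterministic units, and a polynomial-size circuit then generates the syntax tree of an exponentially large $\ETR$ sentence encoding mechanism tables, per-world values $g^{\alpha}_{i,u}$, and the unfolded (in)equalities. The paper avoids all of this circuit bookkeeping by going through machine characterizations: it first strengthens the result $\SATcausalpoly\in\existsR$ of \cite{ibeling2022mosse} to tolerate subtraction and conditional probabilities (replacing each quotient $\alpha/\beta$ by a fresh variable $z$ with constraints $\alpha=z\cdot\beta$ and $\beta\neq 0$), then uses $\existsR=\realNP$ \citep{erickson2022smoothing} to obtain a polynomial-time nondeterministic real RAM for $\SATcausalpoly$, expands all summation operators of the given $\SATcausalpolysum$ instance into an exponentially larger sum-free instance, and runs that algorithm on it; the whole computation is an exponential-time nondeterministic real RAM, so membership follows from $\succR=\realNEXP$ \citep{blaser2024existential}. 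Your approach is more self-contained (it needs neither real RAM characterization) and extends to Layer~3 the style of membership proof that \cite{zander2023ijcai} gave for Layers~1 and~2; what it costs you is precisely the delicate step you flag yourself, namely verifying that the address-and-read recursion and the exponentially wide Lagrange-interpolation sums are uniformly emitted by a polynomial-size circuit, plus making the small-support bound explicit (the paper inherits it implicitly from the $\Delta^+$ construction of Moss\'e et al.\ applied to the expanded formula). The paper's route buys brevity and modularity at the price of leaning on the recent machine characterization of $\succR$; yours buys independence from that machinery at the price of substantially more error-prone encoding work.
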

\begin{proofidea}
    \cite{blaser2024existential} introduced the $\realNEXP$ machine model, where $\succR$ is precisely the set of all languages decidable by exponential-time non-deterministic real RAMs.
    Combined with the algorithm proving $\SATcausalpoly \in \existsR$ (however without subtractions or conditional probabilities) from \cite{ibeling2022mosse} and the classification $\existsR = \realNP$ from \cite{erickson2022smoothing}, i.e.\ $\existsR$ being the set of all languages decidable by polynomial-time non-deterministic real RAMs, we can expand the unary sums explicitly and then run the non-deterministic real RAM algorithm for the resulting $\SATcausalpoly$ instance.
    Special care has to be taken in dealing with subtractions or conditional probabilities, here we use a trick by Tsaitin, the details of which can be found in Lemma~\ref{lem:sat:poly:in:etr} in the appendix.
\end{proofidea}

We note that \citet[Theorem~3]{ibeling2024probabilistic} independently 
obtained a variant of the above result, also using the machine characterization of $\succR$ 
given by \cite{blaser2024existential}.

\begin{remark}
It can be shown that the hardness proofs for $\SATprobpolysum$, $\SATinterventpolysum$, and $\SATcausalpolysum$ do not need conditional probabilities in the basic terms.
\end{remark}

\section{Discussion}\label{sec:landscape}
This work studies the computational complexities of satisfiability problems for languages 
at all levels of the PCH. Our new completeness results nicely extend 
and complement the previous achievements by \cite{fagin1990logic},  \cite{ibeling2022mosse}, 
\cite{zander2023ijcai}, and \cite{blaser2024existential}. 
The main focus of our research was on languages 
allowing the use of marginalization which is expressed in the languages by
a summation operator $\Sigma$ over the domain of the random 
variables. This captures the standard notation commonly used in probabilistic and 
causal inference.

A very interesting feature of the satisfiability problems for
the full, polynomial languages  is the following property.
For both variants, with and  without summation operators,
while the expressive powers of the corresponding languages 
differ, the complexities of the corresponding satisfiability 
problems at all three levels of the PCH are the same. 
Interestingly, the same holds for linear 
languages \emph{without} marginalization, too (cf. Table~\ref{tab:overview}.).
We find that the situation changes drastically
in the case of linear languages \emph{allowing} the summation operator~$\Sigma$. 
One of our main results characterizes the complexities of 
$\SATproblinsum, \SATinterventlinsum,$ and~$\SATcausallinsum$ problems 
as  $\NP^{\ccPP}$, \PSPACE{}, and~$\NEXP$-complete, resp. The analogous completeness 
results hold for $\SATicompsum$.

Another interesting feature is that the completeness results for linear 
languages are expressed in terms of standard Boolean classes while 
the completeness of satisfiability for languages involving polynomials 
over the probabilities requires classes over the reals.

\subsection*{Acknowledgments}

This work was supported by the Deutsche Forschungsgemeinschaft (DFG) grant 471183316 (ZA 1244/1-1).
We thank Till Tantau for the key idea that made the proof of the $\NP^\ccPP$-hardness of $\SATproblinsum$ possible.

\newpage

\bibliography{lit}

\begin{thebibliography}{29}
\providecommand{\natexlab}[1]{#1}
\providecommand{\url}[1]{\texttt{#1}}
\expandafter\ifx\csname urlstyle\endcsname\relax
  \providecommand{\doi}[1]{doi: #1}\else
  \providecommand{\doi}{doi: \begingroup \urlstyle{rm}\Url}\fi

\bibitem[Achilleos(2015)]{schoenfinkelLogicBinaryNEXP2015}
Antonis Achilleos.
\newblock {NEXP}-completeness and universal hardness results for justification
  logic.
\newblock In \emph{Proc. International Computer Science Symposium in Russia
  (CSR 2015)}, pp.\  27--52. Springer, 2015.

\bibitem[Arora \& Barak(2009)Arora and Barak]{arora2009computational}
Sanjeev Arora and Boaz Barak.
\newblock \emph{Computational complexity: a modern approach}.
\newblock Cambridge University Press, 2009.

\bibitem[Bareinboim et~al.(2022)Bareinboim, Correa, Ibeling, and
  Icard]{bareinboim2022pearl}
Elias Bareinboim, Juan~D. Correa, Duligur Ibeling, and Thomas Icard.
\newblock \emph{On Pearl’s Hierarchy and the Foundations of Causal
  Inference}, pp.\  507--556.
\newblock Association for Computing Machinery, New York, NY, USA, 2022.

\bibitem[Bl\"{a}ser et~al.(2024)Bl\"{a}ser, D\"{o}rfler, Li\'{s}kiewicz, and
  van~der Zander]{blaser2024existential}
Markus Bl\"{a}ser, Julian D\"{o}rfler, Maciej Li\'{s}kiewicz, and Benito
  van~der Zander.
\newblock {The Existential Theory of the Reals with Summation Operators}.
\newblock In Juli\'{a}n Mestre and Anthony Wirth (eds.), \emph{35th
  International Symposium on Algorithms and Computation (ISAAC 2024)}, volume
  322 of \emph{Leibniz International Proceedings in Informatics (LIPIcs)}, pp.\
   13:1--13:19, Dagstuhl, Germany, 2024. Schloss Dagstuhl -- Leibniz-Zentrum
  f{\"u}r Informatik.
\newblock ISBN 978-3-95977-354-6.
\newblock \doi{10.4230/LIPIcs.ISAAC.2024.13}.
\newblock URL
  \url{https://drops.dagstuhl.de/entities/document/10.4230/LIPIcs.ISAAC.2024.13}.

\bibitem[Canny(1988)]{existentialTheoryOfRealsCanny1988some}
John Canny.
\newblock Some algebraic and geometric computations in {PSPACE}.
\newblock In \emph{Proc. of the 20th ACM Symposium on Theory of Computing (STOC
  1988)}, pp.\  460--467. ACM, 1988.

\bibitem[Cooper(1990)]{cooper1990computational}
Gregory~F Cooper.
\newblock The computational complexity of probabilistic inference using
  bayesian belief networks.
\newblock \emph{Artificial intelligence}, 42\penalty0 (2-3):\penalty0 393--405,
  1990.

\bibitem[Dagum \& Luby(1993)Dagum and Luby]{dagum1993approximating}
Paul Dagum and Michael Luby.
\newblock Approximating probabilistic inference in {Bayesian} belief networks
  is {NP}-hard.
\newblock \emph{Artificial Intelligence}, 60\penalty0 (1):\penalty0 141--153,
  1993.

\bibitem[Erickson et~al.(2022)Erickson, Van Der~Hoog, and
  Miltzow]{erickson2022smoothing}
Jeff Erickson, Ivor Van Der~Hoog, and Tillmann Miltzow.
\newblock Smoothing the gap between {NP} and {ER}.
\newblock \emph{SIAM Journal on Computing}, pp.\  FOCS20--102, 2022.

\bibitem[Fagin et~al.(1990)Fagin, Halpern, and Megiddo]{fagin1990logic}
Ronald Fagin, Joseph~Y Halpern, and Nimrod Megiddo.
\newblock A logic for reasoning about probabilities.
\newblock \emph{Information and Computation}, 87\penalty0 (1-2):\penalty0
  78--128, 1990.

\bibitem[Fisher(1936)]{fisher1936design}
Ronald~Aylmer Fisher.
\newblock Design of experiments.
\newblock \emph{British Medical Journal}, 1\penalty0 (3923):\penalty0 554,
  1936.

\bibitem[Georgakopoulos et~al.(1988)Georgakopoulos, Kavvadias, and
  Papadimitriou]{georgakopoulos1988probabilistic}
George Georgakopoulos, Dimitris Kavvadias, and Christos~H. Papadimitriou.
\newblock Probabilistic satisfiability.
\newblock \emph{{Journal of Complexity}}, 4\penalty0 (1):\penalty0 1--11, 1988.

\bibitem[Grigoriev \& Vorobjov(1988)Grigoriev and
  Vorobjov]{grigoriev1988solving}
Dima Grigoriev and Nicolai Vorobjov.
\newblock Solving systems of polynomial inequalities in subexponential time.
\newblock \emph{J. Symb. Comput.}, 5\penalty0 (1/2):\penalty0 37--64, 1988.

\bibitem[Ibeling \& Icard(2020)Ibeling and Icard]{ibeling2020probabilistic}
Duligur Ibeling and Thomas Icard.
\newblock Probabilistic reasoning across the causal hierarchy.
\newblock In \emph{Proc. 34th {AAAI} Conference on Artificial Intelligence
  ({AAAI} 2020)}, pp.\  10170--10177. {AAAI} Press, 2020.

\bibitem[Ibeling et~al.(2024)Ibeling, Icard, and
  Moss{\'e}]{ibeling2024probabilistic}
Duligur Ibeling, Thomas Icard, and Milan Moss{\'e}.
\newblock On probabilistic and causal reasoning with summation operators.
\newblock \emph{Journal of Logic and Computation}, pp.\  exae068, 2024.

\bibitem[Lewis(1980)]{schoenfinkelLogicNEXPLewis1980}
Harry~R. Lewis.
\newblock Complexity results for classes of quantificational formulas.
\newblock \emph{Journal of Computer and System Sciences}, 21\penalty0
  (3):\penalty0 317--353, 1980.

\bibitem[Littman et~al.(1998)Littman, Goldsmith, and
  Mundhenk]{littman1998computational}
Michael~L. Littman, Judy Goldsmith, and Martin Mundhenk.
\newblock The computational complexity of probabilistic planning.
\newblock \emph{Journal of Artificial Intelligence Research}, 9:\penalty0
  1--36, 1998.

\bibitem[Moss{\'e} et~al.(2022)Moss{\'e}, Ibeling, and Icard]{ibeling2022mosse}
Milan Moss{\'e}, Duligur Ibeling, and Thomas Icard.
\newblock Is causal reasoning harder than probabilistic reasoning?
\newblock \emph{The Review of Symbolic Logic}, pp.\  1--26, 2022.

\bibitem[Nilsson(1986)]{nilsson1986probabilistic}
Nils~J. Nilsson.
\newblock Probabilistic logic.
\newblock \emph{{Artificial Intelligence}}, 28\penalty0 (1):\penalty0 71--87,
  1986.

\bibitem[Park \& Darwiche(2004)Park and Darwiche]{park2004complexity}
James~D Park and Adnan Darwiche.
\newblock Complexity results and approximation strategies for {MAP}
  explanations.
\newblock \emph{Journal of Artificial Intelligence Research}, 21:\penalty0
  101--133, 2004.

\bibitem[Pearl(1988)]{pearl1988probabilistic}
Judea Pearl.
\newblock \emph{Probabilistic reasoning in intelligent systems: networks of
  plausible inference}.
\newblock Morgan Kaufmann, 1988.

\bibitem[Pearl(2009)]{Pearl2009}
Judea Pearl.
\newblock \emph{Causality}.
\newblock Cambridge University Press, 2009.
\newblock ISBN 0-521-77362-8.

\bibitem[Pearl \& Mackenzie(2018)Pearl and Mackenzie]{pearl2018book}
Judea Pearl and Dana Mackenzie.
\newblock \emph{The book of why: the new science of cause and effect}.
\newblock Basic books, 2018.

\bibitem[Roth(1996)]{roth1996hardness}
Dan Roth.
\newblock On the hardness of approximate reasoning.
\newblock \emph{Artificial Intelligence}, 82\penalty0 (1-2):\penalty0 273--302,
  1996.

\bibitem[Schaefer(2009)]{existentialTheoryOfRealsSchaefer2009complexity}
Marcus Schaefer.
\newblock Complexity of some geometric and topological problems.
\newblock In \emph{Proc. International Symposium on Graph Drawing (GD 2009)},
  pp.\  334--344. Springer, 2009.

\bibitem[Schaefer et~al.(2024)Schaefer, Cardinal, and
  Miltzow]{schaefer2024existential}
Marcus Schaefer, Jean Cardinal, and Tillmann Miltzow.
\newblock The existential theory of the reals as a complexity class: A
  compendium.
\newblock \emph{arXiv preprint arXiv:2407.18006}, 2024.

\bibitem[Shpitser \& Pearl(2008)Shpitser and Pearl]{shpitser2008complete}
Ilya Shpitser and Judea Pearl.
\newblock Complete identification methods for the causal hierarchy.
\newblock \emph{Journal of Machine Learning Research}, 9:\penalty0 1941--1979,
  2008.

\bibitem[Suppes \& Zanotti(1981)Suppes and Zanotti]{suppes1981probabilistic}
Patrick Suppes and Mario Zanotti.
\newblock When are probabilistic explanations possible?
\newblock \emph{Synthese}, 48:\penalty0 191--199, 1981.

\bibitem[van~der Zander et~al.(2023)van~der Zander, Bl\"{a}ser, and
  Li\'{s}kiewicz]{zander2023ijcai}
Benito van~der Zander, Markus Bl\"{a}ser, and Maciej Li\'{s}kiewicz.
\newblock The hardness of reasoning about probabilities and causality.
\newblock In \emph{Proc. Joint Conference on Artificial Intelligence (IJCAI
  2023)}, 2023.

\bibitem[Zhang et~al.(2022)Zhang, Tian, and Bareinboim]{zhang2022partial}
Junzhe Zhang, Jin Tian, and Elias Bareinboim.
\newblock Partial counterfactual identification from observational and
  experimental data.
\newblock In \emph{Proc. International Conference on Machine Learning (ICML
  2022)}, pp.\  26548--26558. PMLR, 2022.

\end{thebibliography}
\bibliographystyle{iclr2025_conference}

\newpage
\appendix

\section{Technical Details and Proofs}\label{sec:appendix}

In this section we prove the completeness results  in Table~\ref{tab:overview}:
the first result will follow from Lemma~\ref{lem:prob:lin:sum:in:NP:PP} and \ref{lem:prob:lin:sum:NP:PP:hard},
Result {\it(2)} from Lemma~\ref{lem:interventional:lin:sum:in:PSPACE} and \ref{lem:interventional:lin:sum:PSPACE:hard},
and  {\it(3)}  will follow from Lemma~\ref{lem:causal_comp_lin_sum:NEXP_complete}. 
We also show the proof  of Proposition~\ref{proposition:lin:no:negations:in:post-int}.

\subsection{Proofs of Section~\ref{sec:increasing:prob}}
\label{sec:increasing:prob:proofs}


We first prove the completeness results  in Theorem~\ref{thm:main:sat:prob:causal:booleanclasses:prob}, which will follow from Lemma~\ref{lem:prob:lin:sum:in:NP:PP} and \ref{lem:prob:lin:sum:NP:PP:hard}.

\begin{proof}[Proof of Fact~\ref{fact:linear:prob:logic:separation}]
$\sum_{x_{i_1}}\ldots\sum_{x_{i_l}} \PP{\delta}$ is equivalent to 
\begin{align*}
&\sum_{x_{i_1}}\ldots\sum_{x_{i_l}}  \sum_{\hat{x}_1} \ldots \sum_{\hat{x}_n}  \PP{(X_1=\hat{x}_1 \wedge \ldots \wedge X_n=\hat{x}_n) \wedge \delta} \\
 =&  \sum_{\hat{x}_1} \ldots \sum_{\hat{x}_n}  \sum_{x_{i_1}}\ldots\sum_{x_{i_l}}  \PP{(X_1=\hat{x}_1 \wedge \ldots \wedge X_n=\hat{x}_n) \wedge \delta} \\
 =& \sum_{\hat{x}_1} \ldots \sum_{\hat{x}_n}  \sum_{x_{i_1}}\ldots\sum_{x_{i_l}} p_{\hat{x}_1\ldots \hat{x}_n} \delta_{\hat{x}_1\ldots \hat{x}_n}(x_{i_1},\ldots,x_{i_l})  \\
 =& \sum_{\hat{x}_1} \ldots \sum_{\hat{x}_n} p_{\hat{x}_1\ldots \hat{x}_n} \sum_{x_{i_1}}\ldots\sum_{x_{i_l}} \delta_{\hat{x}_1\ldots \hat{x}_n}(x_{i_1},\ldots,x_{i_l}),
\end{align*}
which completes the proof.
\end{proof}

Since the second sum only depend on $\hat{x}_1\ldots \hat{x}_n$ and not on $p_{\hat{x}_1\ldots \hat{x}_n} $, they can be calculated without knowing the probability distribution. Due to the dependency on  $\hat{x}_1\ldots \hat{x}_n$  the expression cannot be simplified further in general. 
However, if $\delta$ contains no constant events like $X_{i_j}=0$ but only events $X_{i_j}=x_{i_j}$ depending on the summation variables $x_{i_j}$, the sum always includes one iteration where the event occurs and $c-1$ iterations where it does not. Thus the sum $\sum_{x_{i_1}}\ldots\sum_{x_{i_l}} \delta_{\hat{x}_1\ldots \hat{x}_n}(x_{i_1},\ldots,x_{i_l})$ is constant and 
effectively counts the number of satisfying assignments to the formula $\delta$.
Since the sum $\sum_{\hat{x}_1} \ldots \sum_{\hat{x}_n} p_{\hat{x}_1\ldots \hat{x}_n} $ is always $1$, 
the original sum $\sum_{x_{i_1}}\ldots\sum_{x_{i_l}} \PP{\delta}$ also counts the number of satisfying assignments.

\begin{lemma}\label{lemm:causal:ordering}
    In $\cL_2$ one can encode a causal ordering.
\end{lemma}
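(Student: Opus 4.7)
My plan is to construct a polynomial-size $\cL_2$ formula (using marginalization) that is satisfied by an SCM $\fM$ iff each mechanism $F_i$ of $\fM$ depends only on $X_1, \ldots, X_{i-1}$ and on exogenous variables. The semantic handle is that the causal ordering $X_1 \prec \cdots \prec X_n$ holds iff, for every $i$ and every setting of the remaining variables,
\begin{equation}\label{eq:ord-plan}
\PP{[X_1=x_1, \ldots, X_n=x_n]\, X_i = v} = \PP{[X_1=x_1, \ldots, X_{i-1}=x_{i-1}]\, X_i = v},
\end{equation}
that is, intervening additionally on later variables leaves the interventional distribution of $X_i$ unaffected. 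This is the property that the PSPACE-hardness reduction needs: after fixing the exogenous $\bu$, it forces $X_i$ to be computed as a function of the earlier variables only.

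Written out, (\ref{eq:ord-plan}) is a conjunction of exponentially many linear equations in $\cL_2$ basic terms. To fit inside a polynomial-size $\cL_2$ formula, I would compact the conjunction with the marginalization operator. For each $i$, the first attempt is the averaged version
\[\sum_{x_{i+1}, \ldots, x_n} \PP{[X_1=x_1, \ldots, X_n=x_n]\, X_i = v} = c^{n-i}\cdot \PP{[X_1=x_1, \ldots, X_{i-1}=x_{i-1}]\, X_i = v}.\]
A small concrete SCM example (e.g., two binary variables with $F_1$ copying $X_2$ and $F_2$ ignoring $X_1$) shows that this averaging is only necessary, not sufficient. To recover sufficiency, I plan to attach, for each pair $j > i$ and each exponent $k \in \{0,\ldots,c-1\}$, a moment-weighted companion
\[\sum_{x_{i+1}, \ldots, x_n} x_j^k\cdot \PP{[X_1=x_1, \ldots, X_n=x_n]\, X_i = v} = c^{n-i-1}\Bigl(\sum_{x_j} x_j^k\Bigr) \cdot \PP{[X_1=x_1, \ldots, X_{i-1}=x_{i-1}]\, X_i = v}.\]
The dummy-dependent coefficient $x_j^k$ becomes a rational constant once the summation is expanded, so each summand stays inside the linear language, and the resulting family has polynomial total size.

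Verification would go in two directions. Forward: any SCM respecting the ordering makes (\ref{eq:ord-plan}) hold termwise, so all the averaged and moment-weighted sums inherit the equality. Backward: I would use a Vandermonde inversion over $\{0, \ldots, c-1\}$. Fixing all summation indices except $x_j$, the $c$ moment equations determine the probabilities as functions of $x_j$, forcing them to be constant in $x_j$; iterating over all $O(n^2)$ ordered pairs $i < j$ then rules out each later-variable dependence, and the recursive structure of SCMs converts these pairwise guarantees into the desired total order.

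\textbf{Main obstacle.} The core difficulty is reconciling the polynomial-size requirement with the exponential number of equations hidden in (\ref{eq:ord-plan}): a naive moment-matching argument over every tuple of indices would still need exponentially many equations. My way out is to treat the $O(n^2)$ ordered pairs $(i,j)$ one at a time, using only $O(c)$ moment-weighted sums per pair, and to rely on recursiveness so that ruling out each pairwise later-dependence suffices to rule them all out jointly. Making this pairwise reduction rigorous --- in particular, showing that averaging out the remaining later variables does not mask a residual dependence on the isolated pair $(i,j)$ --- is where the technical care lies.
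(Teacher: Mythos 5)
Your approach is genuinely different from the paper's, and it has two gaps that I do not see how to close. The first is syntactic: in these languages a dummy variable may occur only bound by a summation, and the term under a summation is a fixed expression into which values are substituted. Your equations leave $x_1,\ldots,x_i$ and $v$ free outside the sums, so each ``compressed'' equation is really an exponential family of equations unless you also sum over them (which weakens the condition further); and the moment-weighted terms $x_j^k\cdot \PP{\cdots}$ need a coefficient that varies with the summation index, which is not expressible in $\Ticompsum$ or $\Tilinsum$ (repeated addition encodes a fixed unary constant, not a number of copies depending on the dummy variable). The second gap is semantic and fatal to the pairwise reduction: if $F_i$ depends on the parity $x_j\oplus x_{j'}$ of two later variables, then for every fixed value of $x_j$ the average over $x_{j'}$ and the remaining later variables is identical, so every single-variable moment condition in $x_j$ is satisfied even though $F_i$ genuinely depends on $x_j$. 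Detecting such joint dependencies requires moments over tuples of later variables, which reinstates the exponential blow-up you were trying to avoid. You flagged exactly this masking issue as the remaining ``technical care,'' but it is not a technicality; it is a counterexample to the pairwise scheme.

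The paper sidesteps the certification problem entirely by never trying to verify the \emph{absence} of backward dependencies. It introduces a fresh control variable $C$, prefixes every primitive of the original formula with the intervention $[C=0]$ so that the new constraints cannot interfere with the old ones, and adds, for each consecutive pair in the desired order and each value $k$, the equation $\PP{[C=1, V_{i_{j-1}}=k] V_{i_{j}} = k} = 1$. These $c(n-1)$ basic-term equations force each $V_{i_j}$ to \emph{actually} depend on $V_{i_{j-1}}$ when $C=1$; since SCMs are recursive, a chain of genuine dependencies $V_{i_1}\to V_{i_2}\to\cdots$ pins down the topological order. Mandating a chain of dependencies costs polynomially many equations, whereas forbidding all backward dependencies is what pushed you into the exponential regime.
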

\begin{proof}[Proof of Lemma~\ref{lemm:causal:ordering}]
Given (in-)equalities in $\cL_2$, we add a new variable $C$ and add to each primitive the intervention $[C=0]$. 
This does not change the satisfiability of (in-)equalities.

Given a causal order $V_{i_1} \prec V_{i_2} \prec \ldots$, 
we add $c$ equations for each variable $V_{i_j}$, $j > 1$:

$\PP{[C=1, V_{i_{j-1}}=k] V_{i_{j}} = k} = 1$ for $k=1,\ldots,c$.

The equations ensure, that if one variable is changed, and $C=1$ is set, the next variable in the causal ordering has the same value, thus fixing an order from the first to the last variable.
\end{proof}

\begin{lemma}\label{lem:prob:lin:sum:in:NP:PP}
$\SATproblinsum${} 
is in \NP$^{\ccPP}$.
\end{lemma}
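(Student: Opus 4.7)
The plan is to exhibit an $\NP^{\ccPP}$ algorithm that nondeterministically guesses a polynomial-size witness and verifies it with a $\ccPP$ oracle. Given $\varphi \in \Lproblinsum$, I would first guess a truth value for each atomic linear (in)equality appearing in $\varphi$ and check in polynomial time that the induced Boolean evaluation satisfies $\varphi$. This reduces the task to checking feasibility of a conjunction of linear (in)equalities over terms of $\Tproblinsum$, each atomic term being of the form $\sum_{x_{i_1}}\cdots\sum_{x_{i_l}} \PP{\delta}$ with $\delta \in \Eprop$.

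Next I apply Fact~\ref{fact:linear:prob:logic:separation} to every summation in $\varphi$. This rewrites each atomic term $t$ as a linear form $\sum_{\hat{x}\in \mathit{Val}^n} N^{(t)}_{\hat{x}}\, p_{\hat{x}}$ in the unknowns $p_{\hat{x}} := \PP{X_1=\hat{x}_1,\ldots, X_n=\hat{x}_n}$, where the coefficient $N^{(t)}_{\hat{x}} = \sum_{x_{i_1}}\cdots \sum_{x_{i_l}} \delta_{\hat{x}}(x_{i_1},\ldots,x_{i_l})$ is the number of satisfying assignments of a propositional formula derived from $t$ and $\hat{x}$. Combined with the constraints $p_{\hat{x}} \ge 0$ and $\sum_{\hat{x}} p_{\hat{x}} = 1$, the conjunction becomes a linear-programming feasibility instance with $c^n$ variables but only polynomially many non-nonnegativity constraints, whose coefficients $N^{(t)}_{\hat{x}}$ are integers of polynomial bit-length (even though their numeric values can be exponential in $l$).

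By the standard basic-feasible-solution theorem, if this LP is feasible then it has a vertex solution supported on at most $M$ coordinates, where $M$ is bounded by the number of non-nonnegativity constraints, and each coordinate is a rational of polynomial bit-length in the encoding size of the LP. I would therefore guess the support $\hat{x}^{(1)},\ldots,\hat{x}^{(M)} \in \mathit{Val}^n$ together with rational probabilities $p_1,\ldots,p_M \ge 0$ summing to $1$, all of polynomial bit-length; these data determine an SCM in which a single exogenous variable deterministically selects each atom $\hat{x}^{(j)}$ with probability $p_j$, realizing the chosen joint distribution.

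To verify the guess I use the $\ccPP$ oracle. For each atomic term $t$ of $\varphi$ and each guessed $\hat{x}^{(j)}$, the coefficient $N^{(t)}_{\hat{x}^{(j)}}$ is a $\sharpP$ quantity, and $\sharpP$ values are computable in polynomial time with a $\ccPP$ oracle by the standard binary-search reduction. Once every $N^{(t)}_{\hat{x}^{(j)}}$ is known, each sum $\sum_{j=1}^{M} p_j N^{(t)}_{\hat{x}^{(j)}}$ is an explicit poly-size rational number, so every atomic linear (in)equality can be evaluated in polynomial time and compared with its guessed truth value. Accepting when all checks pass yields the desired $\NP^{\ccPP}$ procedure. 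The main obstacle I anticipate is justifying the polynomial bit-length of the vertex solution despite the fact that some coefficients $N^{(t)}_{\hat{x}}$ may have exponentially large magnitude; this is resolved by noting that each $N^{(t)}_{\hat{x}}$, being a count bounded by $c^{l}$ with $l$ polynomially bounded, still admits a polynomial-length binary encoding, so the standard Khachiyan-style bit bounds on vertices of rational polyhedra apply.
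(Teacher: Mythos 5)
Your proposal is correct and follows essentially the same route as the paper's proof: decompose each summed primitive via Fact~\ref{fact:linear:prob:logic:separation} into a linear form in the atomic probabilities with $\sharpP$-computable integer coefficients, invoke a small-support/small-bit-length solution result to guess a polynomial-size distribution, and verify with the $\ccPP$ oracle. The only cosmetic difference is that the paper obtains the small support by applying Lemma~2.5 of \cite{fagin1990logic} to the vector $A\vec{p}$ induced by an actual satisfying model (which preserves all term values exactly and so sidesteps any worry about strict inequalities arising from negated atoms), whereas you argue via LP vertex solutions after guessing a sign pattern; both are standard and sound.
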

\begin{proof}
First we need to show that satisfiable instances have solutions of polynomial size. 

We write each (sum of a) primitive in the arithmetic expressions as 
$$\sum_{\hat{x}_1} \ldots \sum_{\hat{x}_n} p_{\hat{x}_1\ldots \hat{x}_n} \sum_{x_{i_1}}\ldots\sum_{x_{i_l}} \delta_{\hat{x}_1\ldots \hat{x}_n}(x_{i_1},\ldots,x_{i_l})$$  
according to Fact~\ref{fact:linear:prob:logic:separation}.

%
%
%
%

The value of all $p_{\hat{x}_1\ldots \hat{x}_n}$ can be encoded as a vector $\vec{p}\in \R^{\maxvaluecount^n}$. 
For each $\hat{x_1},\ldots,\hat{x}_n$, the sum $ \sum_{x_{i_1}}\ldots\sum_{x_{i_l}}\delta_{\hat{x}_1\ldots \hat{x}_n}(x_{i_1},\ldots,x_{i_l})$ is a constant integer $\leq \maxvaluecount^l$. Suppose there are $m$ such sums or primitives in the instance, whose values can be encoded as a matrix $A \in \R^{m \times \maxvaluecount^n}$. 

Then the value of every sum in the instance is given by the vector $A \vec{p} \in \R^m$.

There exists a non-negative vector $\vec{q} \in \Q^{\maxvaluecount^n}$ containing at most $m$ non-zero entries with $A\vec{q}  = A\vec{p}$ (Lemma 2.5 in \cite{fagin1990logic}). By including a constraint  $\sum_{\hat{x}_1} \ldots \sum_{\hat{x}_n} p_{\hat{x}_1\ldots \hat{x}_n}=1$ when constructing the matrix $A$, we can ensure that all values in $\vec{q} $ are valid probabilities. $\vec{q} $ is a polynomial sized solution and can be guessed non-deterministically. 

For this solution, the sum $\sum_{\hat{x}_1} \ldots \sum_{\hat{x}_n} p_{\hat{x}_1\ldots \hat{x}_n}$ can then be evaluated (over all non-zero entries $p_{\hat{x}_1\ldots \hat{x}_n}$) in polynomial time. Each sum $\sum_{x_{i_1}}\ldots\sum_{x_{i_l}} \delta_{\hat{x}_1\ldots \hat{x}_n}(x_{i_1},\ldots,x_{i_l})$ 
can be evaluated using the $\ccPP$ oracle because a $\ccPP$ oracle is equivalent to a \#P oracle which can count the number of satisfying assignments of $\delta$.
\end{proof}

Finally, we give the proof of Proposition~\ref{proposition:lin:no:negations:in:post-int}.

\begin{proof}[Proof of Proposition~\ref{proposition:lin:no:negations:in:post-int}]
In the proof of Lemma~\ref{lem:prob:lin:sum:in:NP:PP}, we require a \ccPP-oracle to evaluate the sum $\sum_{x_{i_1}}\ldots\sum_{x_{i_l}} \delta_{\hat{x}_1\ldots \hat{x}_n}(x_{i_1},\ldots,x_{i_l})$, which counts the number of assignments $x_{i_1},\ldots,x_{i_l}$ that make $(X_1=\hat{x}_1 \wedge \ldots \wedge X_n=\hat{x}_n) \ \to \ \delta(x_{i_1},\ldots,x_{i_l})$ a tautology for given $\hat{x}_i$.

Without negations, there are also no disjunctions, so $\delta$ consists of a conjunction of terms that compare some variable to some constant, $X_i=j$, or to some variable $X_i = x_{i_j}$ used in the summation. For any constant $X_i=j$, we check whether $\hat{x}_i = j$. If that is false, the number of satisfying assignments to $\delta$ is zero.
From any condition $X_i = x_{i_j}$, we learn that $x_{i_j}$ has to be equal to $\hat{x}_i$. 
If there is any contradiction, i.e., $X_i = x_{i_j} \wedge X_k = x_{i_j} $ and $\hat{x}_i \neq \hat{x}_k$,  the number of assignments is also zero.
This determines the value of each $x_{i_j}$ occurring in $\delta$ and makes $\delta$ a tautology.

It leaves the value of $x_{i_j}$ not occurring in $\delta$ undefined, but those do not affect $\delta$, and can be chosen arbitrarily, so the number of assignments is just multiplied by the size of the domain $c$ for each not-occurring variable.

This can be evaluated in polynomial time, so the complexity is reduced to $\NP^P = \NP$.

%
%
%

The problems remain NP-hard since a Boolean formula can be encoded in the language $\Lprobcomp$ of combinations of Boolean inequalities. Each Boolean variable $x$ is replaced by $P(X)>0$ for a corresponding random variable $X$. For example, a 3-SAT instance like $(x_{i_{1,1}} \vee \neg x_{i_{1,2}} \vee x_{i_{1,3}} ) \wedge (x_{i_{2,1}} \vee \neg x_{i_{2,2}} \vee \neg x_{i_{2,3}} )\wedge \ldots $ 
 can be encoded as 
 $(P(X_{i_{1,1}})>0 \vee \neg P(X_{i_{1,2}})>0 \vee P(X_{i_{1,3}})>0 ) \wedge (P(X_{i_{2,1}})>0 \vee \neg P(X_{i_{2,2}})>0 \vee \neg P(X_{i_{2,3}})>0 )\wedge \ldots $, which clearly has the same satisfiability.
\end{proof}

\subsection{Proofs of Section~\ref{sec:increasing:causal}}
\label{sec:increasing:causal:proofs}

\begin{algorithm}[ht]
\SetAlgoLined
\KwIn{
$\SATinterventlinsum$ instance }
\KwOut{Is the instance satisfiable?}

Guess small model probabilities $p_\bu$\;
Guess a causal order $X_1,\ldots,X_n$\;

Rewrite each sum $\sum_{\by_i} \PP{[\alpha_i]\delta_i}$ in the input as $\sum_\bu p_\bu \sum_{\by_i:
\cF, \bu \models [\alpha_i]\delta_i
} 1 $\;
Initialize a counter $c_i$ to zero for each such sum

\For{
$p_\bu>0$}{\label{alg:interventional:lin:sum:in:PSPACE:line:for:p}
  Guess values $x_1,..,x_n$\;
  Simulate-Interventions$(1,\{\},x_1,...,x_n)$
}
Replace the sums by $c_i$ and verify whether the (in-)equalities are satisfied\;
\vspace{0.5cm}
\SetKwProg{myfunc}{Function}{}{}
\myfunc{Simulate-Interventions$(i,\setofinterventions,x_1,\ldots,x_n)$}
{
\KwIn{Current variable $X_i$; set of interventions $\alpha$; values $x_1,\ldots,x_n$}
\eIf{$i > n$}{ \label{alg:interventional:lin:sum:in:PSPACE:line:final:iteration}
  \For{each sum counter $c_j$}{\For{all possible values $\by_j$ of the sum}{
    \If{$\alpha_j$ (after inserting $\by_i$) is $[\{X_i = x_i \text{ for }i\in \setofinterventions\}]$  
and $x_1,..,x_n$ satisfy $\delta_j$ (after inserting $\by_i$) }{
    increment $c_j$ by $p_\bu$
    }}
  }
}{
  Simulate-Interventions$(i+1,\setofinterventions,x_1,...,x_n)$ \label{alg:interventional:lin:sum:in:PSPACE:line:no:intervention} \;
  \For{value $v$}{\label{alg:interventional:lin:sum:in:PSPACE:line:intervention}
    Let $x'_1,...,x'_{n} := x_1,...,x_{n}$\;
    $x'_i := v$\;
    \If{$v \neq x_i$}{
      guess new values $x'_{i+1},...,x'_{n}$
    }
  Simulate-Interventions$(i+1,\setofinterventions \cup \{i\}, x'_1,...,x'_n)$ \label{alg:interventional:lin:sum:in:PSPACE:line:final:call}\;
  }
}
}
\caption{Solving $\SATinterventlinsum$}\label{alg:interventional:lin:sum:in:PSPACE}
\end{algorithm}

\begin{lemma}\label{lem:interventional:lin:sum:in:PSPACE}
$\SATinterventlinsum$ is in $\PSPACE$.
\end{lemma}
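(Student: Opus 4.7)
The plan is to verify that Algorithm~\ref{alg:interventional:lin:sum:in:PSPACE} is a correct nondeterministic procedure that uses polynomial space, and then invoke Savitch's theorem to conclude $\SATinterventlinsum \in \PSPACE$.

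First I would establish a small-model property analogous to Lemma~\ref{lem:prob:lin:sum:in:NP:PP}: every satisfiable instance admits a satisfying SCM whose exogenous distribution is supported on polynomially many atoms $\bu$ with probabilities $p_\bu$ of polynomial bit length. Once the deterministic functions $\cF$ are fixed, each sum $\sum_{\by_j}\PP{[\alpha_j]\delta_j}$ is a linear combination $\sum_\bu c_{j,\bu}\,p_\bu$ with integer coefficients $c_{j,\bu}\in\{0,1,\dots,\maxvaluecount^{|\by_j|}\}$, so the input (in-)equalities become a polynomial-size linear system in the unknowns $p_\bu$ to which the standard basic-feasible-solution argument applies. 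By Lemma~\ref{lemm:causal:ordering}, a causal ordering $X_1\prec\cdots\prec X_n$ may also be imposed without loss of generality. Both the support of $p_\bu$ and the causal order are what the algorithm guesses upfront.

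Next I would justify the enumeration. Because every primitive contains exactly one intervention, for a fixed $\bu$ the value of $\sum_{\by_j}\mathbf{1}[\cF,\bu\models[\alpha_j(\by_j)]\delta_j]$ can be recovered by visiting every pair $(\setofinterventions,\bx)$ arising from applying some intervention $\{X_k{=}x_k:k\in\setofinterventions\}$ to $\cF$ and observing the resulting endogenous values. Simulate-Interventions enumerates these pairs: at variable $X_i$ it either performs no intervention (line~\ref{alg:interventional:lin:sum:in:PSPACE:line:no:intervention}) or intervenes with each value $v\in\mathit{Val}$ (line~\ref{alg:interventional:lin:sum:in:PSPACE:line:intervention}), re-guessing the downstream values only when a new intervention perturbs the current one. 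At each base case, for every sum $j$ and every binding $\by_j$ the algorithm credits $p_\bu$ to $c_j$ exactly when the current $(\setofinterventions,\bx)$ matches $\alpha_j(\by_j)$ and $\bx$ satisfies $\delta_j(\by_j)$. After all $\bu$ with $p_\bu>0$ have been processed, the counters hold the numerical values of the sums, and a deterministic check of the input (in-)equalities completes the algorithm.

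For the space bound, the recursion has depth $n$ and each stack frame stores an $O(n\log\maxvaluecount)$-bit configuration $(i,\setofinterventions,x_1,\dots,x_n)$, so the call stack uses $\poly(n)$ space. The upfront guesses are polynomial by the small-model property. Each counter $c_j$ is bounded by $\maxvaluecount^{|\by_j|}\cdot\sum_\bu p_\bu\le\maxvaluecount^n$ and, thanks to the polynomial common denominator shared by all $p_\bu$, retains polynomial bit length throughout. The base-case iteration over $\by_j$ stores only one current binding at a time. Hence the procedure is a nondeterministic polynomial-space algorithm, and $\SATinterventlinsum\in\mathsf{NPSPACE}=\PSPACE$ by Savitch's theorem.

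The delicate step is the correctness argument in the second paragraph: the algorithm visits triples $(\bu,\setofinterventions,\bx)$ individually without recording or enforcing consistency of a single underlying $\cF$ across different interventions, yet the counters must coincide with the sums $\sum_\bu p_\bu\sum_{\by_j}\mathbf{1}[\cF,\bu\models[\alpha_j(\by_j)]\delta_j]$ for one fixed SCM. The key observation is that each primitive mentions exactly one intervention, so the sums depend only on the per-intervention marginal post-intervention distributions rather than any joint distribution across interventions; the guessed causal ordering then guarantees that the per-intervention outcomes produced by the recursion can always be glued into a globally consistent $\cF$, which is the only place where the restriction on primitives is genuinely used.
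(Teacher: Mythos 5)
Your proposal follows the paper's proof almost step for step: the same Algorithm~\ref{alg:interventional:lin:sum:in:PSPACE}, the same upfront guesses of a small support for $p_{\bu}$ and of a causal order, the same rewriting of each sum as $\sum_{\bu} p_{\bu} \sum_{\by_j} 1$, and the same space accounting. The space bound and the completeness direction (a true model yields an accepting run) are fine. The problem is exactly the step you single out as delicate, and your one-sentence resolution of it does not hold: it is not true that the per-intervention outcomes produced by the recursion ``can always be glued into a globally consistent $\cF$''. The recursion re-guesses the downstream values $x'_{i+1},\dots,x'_n$ freshly and independently in every branch where an intervention changes $x_i$, with nothing tying together two branches whose \emph{effective} value prefixes coincide. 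Every recursive SCM satisfies the composition property: if, for a unit $\bu$ under intervention $\setofinterventions$, a non-intervened variable attains a certain value, then additionally intervening to set it to that value changes nothing downstream. The algorithm's leaves need not satisfy this, so the family of ``marginal post-intervention distributions'' it certifies can fail to be realizable by any SCM.

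Concretely, take binary variables $X_1,X_2,X_3$ and the sum-free instance
$\PP{[X_2=1]X_1=1}=1 \wedge \PP{[X_2=1]X_3=1}=1 \wedge \PP{[X_1=1,X_2=1]X_3=0}=1$.
It is unsatisfiable: for any $\bu$ in the support, under $[X_2=1]$ the inputs to $F_3$ are $X_1=1$ (computed), $X_2=1$ (clamped) and $\bu$, and under $[X_1=1,X_2=1]$ they are the identical triple, so $X_3$ must take the same value in both worlds, contradicting the last two constraints. Yet the algorithm accepts: guess the order $X_1\prec X_2\prec X_3$, one atom $\bu$ with $p_{\bu}=1$ and baseline values $(1,0,0)$; the branch for $\setofinterventions=\{2\}$ re-guesses $x'_3=1$ (leaf $(1,1,1)$), while the branch for $\setofinterventions=\{1,2\}$ enters $X_1$ via the no-re-guess case $v=x_1$ and then re-guesses $x''_3=0$ (leaf $(1,1,0)$); all three counters come out to $1$ and the check passes. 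So soundness fails for the procedure as you (and, in fairness, the paper's own two-line gluing remark) describe it: the per-intervention marginals are constrained across interventions by composition, not free. Closing the gap requires either enforcing consistency of the re-guesses across branches with coinciding effective prefixes, or proving that whatever invariants the recursion does maintain (effectiveness plus composition along the guessed order) suffice for realizability by an SCM --- neither of which your proposal supplies.
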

\begin{proof}
We need to show that Algorithm~\ref{alg:interventional:lin:sum:in:PSPACE}  is correct and in \PSPACE. The basic idea of the algorithm is that rather than guessing a model and evaluating each sum with its interventions\footnote{For example, a sum like $\sum_x \PP{[X=x]Y=y}$ performs multiple interventions on $X$, which is difficult to evaluate. Sums containing only a single intervention could be evaluated trivially. }, we enumerate all possible interventions (and resulting values) and increment each sum that  includes the intervention. Thereby, rather than storing the functions and interventions, we only need to store and update the value of the sums.

By definition, each sum $\sum_{\by_i} \PP{[\alpha_i]\delta_i}$ in the input can be written as $\sum_\bu p_\bu \sum_{\by_i:
\cF, \bu \models [\alpha_i]\delta_i
} 1 $, where the second sum does not depend on $p_\bu$.
As in Lemma~\ref{lem:prob:lin:sum:in:NP:PP}, one can write the probabilities as a single vector $\vec{p_\bu}$, each sum $\sum_{\by_i: \cF, \bu \models [\alpha_i]\delta_i} 1$ as row in a matrix $A$, such that an entry of the row is $1$ if $\cF, \bu \models [\alpha_i]\delta_i$ holds and $0$ otherwise. Then one obtains the value of all sums as product $A \vec{p_\bu}$. A small model property\footnote{Small model property means that whenever there is a satisfying probability distribution, then there is also one with only polynomially many positive elementary probabilities. Note that, e.g., the probability distribution on $n$ binary random variables has $2^n$ entries. Some of the previous results we refer to, e.g., \cite{fagin1990logic,ibeling2020probabilistic}, are based on this small property.} follows that there are only polynomial many, rational probabilities $p_\bu$. These can be guessed non-deterministically\footnote{The algorithm just guesses them without doing any rewriting of the equations. This paragraph just explains why this guessing is possible.}. 

Next, we combine the terms $\sum_\bu p_\bu$ of all sums (implicitly).
For example, two sums $\sum_{\by_i} \PP{[\alpha_i]\delta_i} + \sum_{\by_j} \PP{[\alpha_j]\delta_j}$ can be rewritten as
\begin{align*}
\sum_{\by_i} \PP{[\alpha_i]\delta_i} + \sum_{\by_j} \PP{[\alpha_j]\delta_j}
& =  \sum_\bu p_\bu \sum_{\by_i: \cF, \bu \models [\alpha_i]\delta_i} 1 + \sum_\bu p_\bu \sum_{\by_j: \cF, \bu \models [\alpha_j]\delta_j} 1  \\
& = \sum_\bu p_\bu \left( \sum_{\by_i: \cF, \bu \models [\alpha_i]\delta_i} 1 +  \sum_{\by_j: \cF, \bu \models [\alpha_j]\delta_j} 1 \right).
\end{align*}
The algorithm performs this sum over $\bu$ in line~\ref{alg:interventional:lin:sum:in:PSPACE:line:for:p} and calculates the next sums in the subfunction. We can and will ignore the actual values $\bu$. Relevant is only that the value of the sums is multiplied by $p_\bu$ and that the functions $\cF$ might change in each iteration.

Recall that the functions $\cF = (F_1,\ldots,F_n)$ determine the values of the endogenous variables, that is the value of variable $X_i$ is given by $x_i = F_i(\bu,x_1,\ldots,x_{i-1})$. Thereby the functions (i.e. variables) have a causal order $X_1 \prec \ldots \prec X_n$, such that the value $x_i$ only depends on variables $X_j$ with $j < i$.
In reverse, this means that each intervention on a variable $X_i$ can only change variables $X_j$ with $j >i$. 

Rather than storing the functions $\cF = (F_1,\ldots,F_n)$, 
the algorithm only stores the values $x_i = F_i(\bu,x_1,\ldots,x_{i-1})$, i.e. the values of the endogenous variables. The algorithm knows these values as well as the causal order, since they can be guessed non-deterministically.

The subfunction Simulate-Interventions then performs all possible interventions recursively, intervening first on variable $X_1$, then $X_2$, $\ldots$, until $X_n$. 
The parameter $i$ means an intervention on variable $X_i$, $\alpha$ is the set of all previous interventions, 
and $x_1,\ldots,x_n$ the current values.

In line~\ref{alg:interventional:lin:sum:in:PSPACE:line:no:intervention}, it proceeds to the next variable, without changing the current variable $X_i$ (simulating all possible interventions includes intervening on only a subset of variables). 
In line~\ref{alg:interventional:lin:sum:in:PSPACE:line:intervention}, it enumerates all values $x'_i$ for variable $X_i$. 
If $x_i = x'_i$, then the intervention does nothing. 
If $x_i \neq x'_i$, then the intervention might change all variables $X_j$ with $j>i$ (because the function $F_j$ might depend on $X_i$ and change its value).  This is simulated by guessing the new values $x'_j$. Thereby, we get the new values without considering the functions.

In the last call, line~\ref{alg:interventional:lin:sum:in:PSPACE:line:final:call}, it has completed a set of interventions $\alpha$.
The function then searches every occurrence of the interventions $\alpha$ in the (implicitly expanded) input formula.
That is, for each sum $\sum_{\by_j: \cF, \bu \models [\alpha_j]\delta_j} 1$, it counts how often $\alpha_j = \alpha$ occurs in the sum while the values $x_i$ satisfy $\delta_j$. \footnote{Rather than incrementing the counter $c_j$ by $1$ and then multiplying the final result by $p_\bu$, we increment it directly by $p_\bu$.} 

Since each intervention is enumerated only once, in the end, it obtains for all sums their exact value. It can then verify whether the values satisfy the (in-)equalities of the input.

If a satisfying model exists, the algorithm confirms it, since it can guess the probabilities and the values of the functions. In reverse, if the algorithm returns true, a satisfying model can be constructed. The probabilities directly give a probability distribution $P(\bu)$. 
The functions $F_i$ can be constructed because, for each set of values $\bu,x_1,\ldots,x_{i-1}$, only a single value for $x_i$ is guessed, which becomes the value of the function.

Algorithm~\ref{alg:interventional:lin:sum:in:PSPACE} runs in non-deterministic polynomial space and thus in \PSPACE{}.
\end{proof}

\subsection{Proofs of Section~\ref{sec:increasing:counter}}
\label{sec:increasing:causal:counter}

\begin{lemma}
\label{lem:causal_comp_lin_sum:NEXP_complete}
$\SATcausalcompsum$ and $\SATcausallinsum$ are $\NEXP$-complete.
\end{lemma}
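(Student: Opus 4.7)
The plan is to handle the upper and lower bounds separately, matching the proof sketch outlined in the paper. Both problems lie between $\SATcausalcompsum$ and $\SATcausallinsum$ and have the same complexity since any base-level instance is already a linear-with-sums instance, while any linear-with-sums instance can be expanded as discussed below.

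For the $\NEXP$ upper bound, I would rewrite a given $\Lcausallinsum$ formula $\varphi$ into an exponentially larger $\Lcausallin$ formula $\varphi'$ with the same set of satisfying SCMs by purely syntactic expansion of every unary sum: replace each $\sum_{x}\bt$ with $\bt[\sfrac{0}{x}]+\bt[\sfrac{1}{x}]+\cdots+\bt[\sfrac{\maxvaluecount-1}{x}]$, iterating innermost-first. Even with nested sums the total blow-up is bounded by $\maxvaluecount^{O(|\varphi|)}$, so $|\varphi'|$ is at most exponential in $|\varphi|$. Since $\SATcausallin\in\NP$ by the result of Moss\'e, Ibeling, and Icard recalled in Table~\ref{tab:overview}, a nondeterministic machine can in exponential time write $\varphi'$ out and run the polynomial-time verifier on it. Hence $\SATcausallinsum\in\NEXP$.

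For $\NEXP$-hardness, I would reduce from the satisfiability problem for \emph{Sch\"onfinkel--Bernays} (a.k.a.\ $\exists^{*}\forall^{*}$) first-order sentences without function symbols, which is $\NEXP$-complete. Given such a sentence $\Phi=\exists x_{1}\myldots x_{n}\,\forall y_{1}\myldots y_{m}\,\psi(\bx,\by,R_{1},\ldots,R_{s})$ over Boolean variables and relational predicates $R_{j}$ of arities $k_{j}$, I introduce binary random variables $X_{i}$ (one per existential), $Y_{i}$ (one per universal), and one binary random variable $R_{j}$ for each atomic relation instance, together with auxiliary binary variables encoding the inputs to $R_{j}$. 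Using Lemma~\ref{lemm:causal:ordering}, I force a causal order in which the input variables precede their corresponding $R_{j}$. The single global constraint is $\sum_{\by}\PP{[\by]\psi'}=2^{m}$, where $\psi'$ is the Boolean translation of $\psi$ using $R_{j}=1$ for each atomic sub-formula; this forces $\psi'$ to hold under every intervened assignment to~$\by$, thereby simulating the universal quantifier, while the choice of the underlying SCM realises the existential quantifier over~$\bx$.

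The main obstacle, and the place where counterfactuals are essential, is ensuring each $R_{j}$ behaves like a \emph{function} of its declared inputs rather than of arbitrary other variables or the exogenous noise. For every tuple of input values $\bv$ to $R_{j}$, I would add a counterfactual conjunctive equation stating that $R_{j}$ takes the same value under an intervention that fixes \emph{all} endogenous variables (other than $R_{j}$ itself) to $\bv$-consistent values, as under an intervention that fixes \emph{only} the declared inputs of $R_{j}$ to~$\bv$; equalities of the form $\PP{[\boldsymbol{\alpha}_{1}]R_{j}=1\wedge[\boldsymbol{\alpha}_{2}]R_{j}=0}=0$ together with their mirror image accomplish this, and they can be combined into a single $\Lcausalcompsum$ constraint via an outer sum over $\bv$. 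Such an equation sits squarely in the counterfactual layer and cannot be expressed interventionally. Checking correctness of the reduction then amounts to (i)~showing that any satisfying SCM induces a model of $\Phi$ by reading off the deterministic relational tables from the $R_{j}$'s and the existential witness from the realised values of the $X_{i}$'s, and (ii)~conversely building, from a model of~$\Phi$, a deterministic SCM (trivial exogenous distribution) whose structural equations realise the relations and the existential witness, automatically satisfying the universal-quantifier and determinism constraints. The anticipated difficulty is getting the counterfactual determinism gadget to work uniformly for predicates of different arities within polynomial size of the reduction; using marginalisation over auxiliary ``address'' variables that index the input tuples is, I expect, the cleanest way to keep the reduction polynomial.
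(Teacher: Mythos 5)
Your proposal is correct and follows essentially the same route as the paper's proof: the upper bound by syntactic expansion of the sums into an exponential-size $\SATcausallin$ instance decided by the $\NP$ algorithm of Moss\'e et al., and the lower bound by reduction from Sch\"onfinkel--Bernays satisfiability with existentials realised by the SCM, universals by $\sum_{\by}\PP{[\by]\psi'}=2^{m}$, and a counterfactual gadget comparing ``intervene on everything except $R_j$'' against ``intervene only on the declared inputs'' to force each relation variable to be a deterministic function of its arguments. The paper implements your ``address variable'' idea concretely via per-occurrence copies $R^j_i$ with argument variables and a marginalised consistency constraint tying each occurrence to the master relation variable, and it additionally adds an explicit constraint that the $X_i$ are unaffected by the $\bY$ (which your appeal to a forced causal order would likewise secure).
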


\begin{proof}
We will reduce the satisfiability of a Sch\"{o}nfinkel-Bernays sentence to the  satisfiability of $\SATcausalcompsum$. 
The class of Sch\"{o}nfinkel--Bernays  sentences (also called Effectively Propositional Logic, EPR) is a fragment of first-order logic formulas where satisfiability is decidable. Each  sentence in the class is of the form $\exists \bx  \forall \by \psi$ whereby $\psi$ can contain logical operations $\wedge, \vee, \neg$, variables $\bx$ and $\by$, equalities, and relations $R_i(\bx,\by)$ which depend on a set of variables, but $\psi$ cannot contain any quantifier or functions. Determining whether a Sch\"{o}nfinkel-Bernays sentence is satisfiable is an $\NEXP$-complete problem \cite{schoenfinkelLogicNEXPLewis1980} even if all variables are restricted to binary values \cite{schoenfinkelLogicBinaryNEXP2015}.  

\def\consistencyterm#1{f_C\llbracket\compactEquals{#1}\rrbracket}

We will represent Boolean values as the value of the random variables, with $ 
0 $ meaning \false{} and $ 
1 $ meaning \true{}.  
We will assume that $c=2$, so that all random variables are binary, i.e. $\mathit{Val} = \{0,1\}$.

In the proof, we will write (in)equalities between random variables as $=$ and $\neq$. In the binary setting, $X=Y$ is an abbreviation for $(X = 0 \land Y = 0) \vee (X = 1 \land Y=1)$, and $X\neq Y$ an abbreviation for $\neg (X=Y)$.
To abbreviate interventions, we will write $[w]$ for $[W=w]$, $[\bw]$ for interventions on multiple variables $[\bW=\bw]$, and $[\bv\setminus\bw]$ for interventions on all endogenous variables except $\bW$.

We use random variables  
$ \bX=\{X_1,\ldots X_n\}$ and $\bY= \{Y_1,\ldots Y_n\}$ for the  quantified Boolean variables $\bx,\by$
in the sentence $\exists \bx  \forall \by \psi$. 
For each distinct $k$-ary relation $R_i(z_1,\ldots,z_k)$ in the formula, we define a random variable $R_i$ and variables $Z^1_i,\ldots,Z_i^k$ for the arguments. For the $j$-th occurrence of that relation $R_i(t^1_{ij},\ldots,t^k_{ij})$ with $t^l_{ij} \in \{x_1,\ldots,x_n,y_1,\ldots,y_n\}$, we define another random variable $R^j_i$. 

We use the following constraint to ensure that $R_i$ only depends on its arguments: 
\begin{equation}\label{eqn:proof:lin:causal:nexptime:relation}
\sum_{\bv} \PP{[z^1_i,\ldots,z^k_i] R_i \neqinPP [\bv\setminus r_i] R_i} = 0
\end{equation}

Thereby $\sum_{\bv}$ refers to summing over all values of all endogenous variables\footnote{All variables include variables $Z^1_i,\ldots,Z^k_i$.} in the model and the constraint says that an  intervention on $Z^1_i,\ldots,Z^k_i$ gives the same result for $R_i$ as an intervention on $Z^1_i,\ldots,Z^k_i$ and the remaining variables, excluding $R_i$.

We use the following constraint to ensure that $R^j_i$ only depends on its arguments: 

\begin{equation}\label{eqn:proof:lin:causal:nexptime:relation:occurrence}
\sum_{\bv} \PP{[t^1_{ij},\ldots,t^k_{ij}] R^j_i \neqinPP [\bv\setminus r^j_i] R^j_i} = 0
\end{equation}

and that $R^j_i$ and $R_i$ have an equal value for equal arguments:


\begin{equation}\label{eqn:proof:lin:causal:nexptime:consistent:relations}
\sum_{t^1_{ij},\ldots,t^k_{ij}} \PP{[T^1_{ij}=t^1_{ij},\ldots,T^k_{ij}=t^k_{ij}] R^j_i \neqinPP [Z^1_i = t^1_{ij},\ldots,Z^k_i=t^k_{ij}] R_i} = 0 .
\end{equation}


We add the following constraint for each $X_i$  to ensure that the values of $\bX$ are not affected by  the values of $\bY$:
\begin{equation}\label{eqn:proof:lin:causal:nexptime:x:y:order}
\sum_{\bv}\sum_{\by'} \PP{ [\bv \setminus x_i ] X_i \neqinPP [\bv \setminus (x_i, \by),\bY=\by'] X_i  } =0  
\end{equation}

Here the first sum sums over all values $\bv$ of all endogenous variables $\bV$ (including $X_i$ and $\bY$), and the second sums sums over values for variables $\bY$. The intervention $[\bv \setminus x_i ]$ intervenes on all variables except $X_i$ and sets the values $\by$ to the values of the first sum. The intervention $[\bv \setminus (x_i, \by),\bY=\by']$ intervenes on all variables except $X_i$ and sets the values $\by$ to the values $\by'$ of the second sum. The constraint thus ensures that the value of $X_i$ does not change when changing $\bY$ from $\by$ to $\by'$.
 
 
Let $\psi'$ be obtained from $\psi$ by replacing equality and relations on the Boolean values with the corresponding definitions for the random variables:
\begin{equation}\label{eqn:proof:lin:causal:nexptime:main:equation}
\sum_{\by} \PP{[\by] \psi' } = 2^n
\end{equation}

Suppose the $\SATcausalcompsum$ instance is satisfied by a model $\fM$. We need to show $\exists \bx  \forall \by \psi$ is satisfiable. 
Each probability $\PP{\ldots}$ implicitly sums over all possible values $\bu$  of the exogenous variables. The values $\bx$ of the variables $\bX$ might change together with the values $\bu$, however, any values $\bx$ that are taken at least once can be used to satisfy $\exists \bx  \forall \by \psi$:
If there was any $\bx$ that would not satisfy $\psi$ for all values of $\by$, $\PP{[\by] \psi' }$ would be less than $ 1$  for these values of $\bx$ (determined by $\bu$) and $\by$, and equation~(\ref{eqn:proof:lin:causal:nexptime:main:equation}) would not be satisfied.
%
\\
For each relation $R_i$, we choose the values given by the random variable $R_i$. 
Each occurrence $R^j_i(t^1_{ij},\ldots,t^k_{ij})$ has a value that is given by $[Z^1_i=t^1_{ij},\ldots,Z^k_i=t^k_{ij}]R_i$ in the model $\fM$. Due to equations~(\ref{eqn:proof:lin:causal:nexptime:consistent:relations}) and equations~(\ref{eqn:proof:lin:causal:nexptime:relation:occurrence}), that is the same value as $[T^1_i=t^1_{ij},\ldots,T^k_{ij}=t^k_{ij}]R^j_i$, which is the value used in $[\by] \psi'$ . Since $[\by] \psi' $ is satisfied, so is $\psi$ and $\exists \bx  \forall \by \psi$.

Suppose $\exists \bx  \forall \by \psi$ is satisfiable. We create a deterministic model $\fM$ as follows:
The value of random variables $\bX$ is set to the values chosen by $\exists\bx$. 
The relation random variables $R_i$ are functions depending on random variables $Z^1_i,\ldots,Z_i^k$ that return the value of the relation $R_i(z_1,\ldots,z_k)$.  
The relation random variables $R^j_i$ on arguments $T^1_{ij},\ldots,T^k_{ij}$ return the value of the relation $R_i(t^1_{ij},\ldots,t^k_{ij})$. This satisfies Equation~\ref{eqn:proof:lin:causal:nexptime:relation} and \ref{eqn:proof:lin:causal:nexptime:relation:occurrence} because the functions only depend on their arguments, and Equation~\ref{eqn:proof:lin:causal:nexptime:consistent:relations} because the functions result from the same relation (so the functions are dependent, but causally independent, which yields a non-faithful model. But the equations do not test for faithfulness or dependences).
All other random variables can be kept constant, which satisfies Equation~\ref{eqn:proof:lin:causal:nexptime:x:y:order} (despite being constant in the model, the causal interventions can still change their values).
Finally, Equation~\ref{eqn:proof:lin:causal:nexptime:main:equation}  holds, because $\psi$ is satisfied for all $\bY$.

The problem can be solved in $\NEXP$ because expanding all sums of a $\SATcausallinsum$ instance creates a $\SATcausallin$ instance of exponential size, which can be solved non-deterministically in a time  polynomial to the expanded size as shown by \cite{ibeling2022mosse}.
\end{proof}

\subsection{Proofs of Section \ref{sec:succR:compl:satsumpoly}}
\label{sec:succR:compl:satsumpoly:proofs}

\cite{ibeling2022mosse} already prove $\SATcausalpoly \in \existsR$ when the $\Lcausalpoly$-formula is allowed to neither contain subtractions nor conditional probabilities.
We slightly strengthen this result to allow both of them.
\begin{lemma}
    \label{lem:sat:poly:in:etr}
    $\SATcausalpoly \in \existsR$.
    This also holds true if we allow the basic terms to contain conditional probabilities.
\end{lemma}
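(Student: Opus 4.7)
The plan is to reduce $\SATcausalpoly$ with subtractions and conditional probabilities to its restricted form handled by \cite{ibeling2022mosse}, at the cost of introducing a few extra existentially quantified real variables in the final $\existsR$ sentence. The whole reduction is polynomial-time and satisfiability-preserving.

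First I would eliminate subtractions. Every polynomial $p$ over probability terms with rational coefficients decomposes as $p = p^+ - p^-$ where $p^+,p^-$ have only nonnegative rational coefficients. Each inequality $p_1 \le p_2$ in $\varphi$ then rewrites to the subtraction-free form $p_1^+ + p_2^- \le p_2^+ + p_1^-$, and equalities similarly. This is a polynomial-size rewrite that lets me assume all polynomial terms use only addition and multiplication.

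Next I would Tseitin-replace each distinct conditional probability term $\PP{\psi_i \mmid \delta_i}$ occurring in $\varphi$ by a fresh real variable $t_i$. Let $\varphi'$ be $\varphi$ with these replacements made, conjoined with the constraints $\PP{\delta_i} > 0$ and $t_i \cdot \PP{\delta_i} = \PP{\psi_i \wedge \delta_i}$ for each $i$; after a further subtraction-elimination pass these new constraints are also subtraction-free. The semantic equivalence is immediate: if an SCM $\fM$ satisfies $\varphi$, then well-definedness of every conditional in $\varphi$ forces $P_{\fM}(\delta_i) > 0$, so setting $t_i := P_{\fM}(\psi_i \wedge \delta_i) / P_{\fM}(\delta_i)$ gives witnesses that satisfy $\varphi'$ together with $\fM$; conversely, in any satisfying pair $(\fM, (t_i))$ for $\varphi'$, the Tseitin constraints force each $t_i$ to equal $P_{\fM}(\psi_i \mmid \delta_i)$, so $\fM$ satisfies $\varphi$.

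Finally I would apply the reduction of \cite{ibeling2022mosse} to the subtraction-free, conditional-free core of $\varphi'$, obtaining an $\existsR$ sentence $\Phi$; then enrich the existential prefix of $\Phi$ with the fresh variables $t_i$, and append the polynomial encodings of the Tseitin constraints using the polynomial translation of each $\PP{\cdot}$ supplied by that reduction. This yields the desired membership in $\existsR$. The main obstacle I anticipate is the bookkeeping argument that the Ibeling--Moss\'e--Icard translation can be invoked as a black box on the conditional-free core while the added variables $t_i$ and their Tseitin constraints slot cleanly into the output ETR instance; this should be essentially routine because ETR natively supports arbitrarily many existentially quantified reals together with polynomial (in)equalities among them, but it requires checking that the polynomial encoding of probabilities produced by their reduction remains compatible with the Tseitin constraints once subtractions have been removed.
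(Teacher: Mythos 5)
Your overall strategy is the same as the paper's: invoke the Ibeling--Moss\'e--Icard $\NP$-reduction to $\ETR$ as a black box on the core of the formula, and handle conditional probabilities by the Tseitin trick of naming each quotient with a fresh existential real $t_i$ constrained by $t_i\cdot\PP{\delta_i}=\PP{\psi_i\wedge\delta_i}$ and $\PP{\delta_i}>0$; that part of your argument matches the paper's proof almost verbatim.

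The problem is your first step. The decomposition of every polynomial term as $p=p^+-p^-$ with nonnegative coefficients is \emph{not} a polynomial-size rewrite in general: the terms of $\Tcausalpoly$ are built by nesting $+$, unary $-$, and $\cdot$, and at every product node you must set $p^+=a^+b^++a^-b^-$ and $p^-=a^+b^-+a^-b^+$, so each of $p^+,p^-$ contains two copies of each factor's parts. For a left-nested product of $k$ copies of $\PP{A}-\PP{B}$ (input size $O(k)$), the sizes of $p^+$ and $p^-$ double at each multiplication, giving formulas of size $\Theta(2^k)$. So as written this step breaks the polynomial-time bound of the reduction. It is also unnecessary: the paper's proof simply observes that the \emph{target} $\ETR$ formula natively supports subtraction, so the subtractions in $\varphi$ can be carried through the Ibeling--Moss\'e--Icard translation unchanged (their ``no subtraction'' hypothesis is an artifact that is lifted for free). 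If you drop the positive/negative decomposition entirely (or, if you insist on eliminating subtractions, introduce a fresh Tseitin variable for each subterm so sharing keeps the size linear), the rest of your argument goes through and coincides with the paper's.
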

\begin{proof}
    \citep{ibeling2022mosse} show that $\SATcausalpoly$ without subtraction or conditional probabilities is in $\existsR$.
    Their algorithm is given in the form of a $\NP$-reduction from $\SATcausalpoly$ to $\ETR$ and using the closure of $\existsR$ under $\NP$-reductions.
    In particular given a $\Lcausalpoly$-formula $\varphi$, they replace each event $\PP{\epsilon}$ by the sum $\sum_{\delta \in \Delta^+: \delta \models \epsilon} \PP{\delta}$ where $\Delta^+ \subseteq \Ecounter$ is a subset of size at most $|\varphi|$.
    They add the constraint $\sum_{\delta \in \Delta^+} \PP{\delta} = 1$ and then replace each of the $\PP{\delta}$ by a variable constrained to be between $0$ and $1$ to obtain a $\ETR$-formula.
    Note that the final $\ETR$-formula allows for subtraction, so $\varphi$ is allowed to have subtractions as well.
    Remains to show how to deal with conditional probabilities.
    We define conditional probabilities $\PP{\delta|\delta'}$ to be undefined if $\PP{\delta'} = 0$ (this proof works similarly for other definitions).
    In $\varphi$ replace $\PP{\delta|\delta'}$ by $\frac{\PP{\delta, \delta'}}{\PP{\delta'}}$.
    The resulting $\ETR$-formula then contains some divisions.
    To remove some division $\frac{\alpha}{\beta}$, we use Tsaitin's trick and replace $\frac{\alpha}{\beta}$ by a fresh variable $z$.
    We then add the constraints $\alpha = z \cdot \beta$ and $\beta \neq 0$ to the formula.
\end{proof}

Now we are ready to show that $\SATcausalpolysum$ can be solved in $\NEXP$ over the Reals.
\begin{proof}[Proof of Theorem~\ref{thm:satcauslfull:in:real:nexp}]
    By Lemma~\ref{lem:sat:poly:in:etr} we know that $\SATcausalpoly$ is in $\existsR$.
    Thus, by~ \citep{erickson2022smoothing}, there exists an $\realNP$ algorithm, call it $A$,
    which for a given $\Lcausalpoly$-formula decides if is satisfiable.
    
    To solve the $\SATcausalpolysum$ problem, a $ \realNEXP$ algorithm expands firstly all sums 
    of a given instance and creates an equivalent $\SATcausalpoly$ instance of size bounded exponentially
    in the size of the initial input formula. Next, the algorithm $A$ is used to decide if the expanded instance 
    is satisfiable. 
\end{proof}

\newpage
\section{Levels of Pearl’s Causal Hierarchy: an Example}
\label{sec:appendix:example:PCH}
To illustrate the main ideas behind the causality notions, we present in this section 
an example that, we hope, will make it easier to understand the formal definitions. In the example, we consider a (hypothetical) scenario involving three attributes
represented by binary random variables: pneumonia modeled by $Z=1$, 
drug treatment (e.g.,  with antibiotics) represented by $X=1$, and recovery, with $Y=1$ 
(and $Y=0$ meaning  mortality). Below we describe an SCM which models an
unobserved true mechanism behind this setting and the canonical patterns 
of reasoning that can be expressed at appropriate layers of the hierarchy.

\vspacebetweenlayers
\noindent
\begin{minipage}[t]{0.68\textwidth}
{\bf Structural Causal Model} An SCM is defined as a tuple $(\cF,P,\bU, \bX)$ which is  of 
\emph{unobserved nature} from the perspective of a researcher who studies the  scenario.
The SCM models the ground truth for the distribution $P(\bU)$ 
of the population and the mechanism $\cF$. In our example, the model assumes three
independent binary random variables $\bU=\{U_1,U_2,U_3\}$, with probabilities:
$
\pp{U_1=1} = 0.75,
\pp{U_2=1} = 0.8,
\pp{U_3=1} = 0.4,
$
and specifies the mechanism $\cF=\{F_1,F_2,F_3\}$  for the evaluation 
of the three endogenous (observed) random variables 
\end{minipage}\hspace*{2mm}
\begin{minipage}[t]{0.25\textwidth}
\footnotesize$${
   \begin{array}{c@{\hskip 1.2mm}c@{\hskip 1.2mm}c@{\hskip 1.2mm}|c|c@{\hskip 1.2mm}c@{\hskip 1.2mm}c@{\hskip 1.2mm}}\vspace*{-8mm}\\
   U_1&U_2&U_3&\multicolumn{1}{c|}{P(\bu)}&Z&X&Y\\ \hline \hline
   0&0&0&    0.03  &1&0&0\\
   0&0&1&    0.02  &1&0&1\\
   0&1&0&    0.12  &1&0&0\\
   0&1&1&    0.08  &1&0&1\\
   1&0&0&    0.09  &0&0&0\\
   1&0&1&    0.06  &0&0&1\\
   1&1&0&    0.36  &0&1&0\\
   1&1&1&   0.24   &0&1&1
   \end{array}
   }
$$
\end{minipage}\\
$Z,X,Y$ as follows: $Z:=F_1(U_1) = 1-U_1;$\ 
$X:=F_2(Z,U_2) = (1-Z) U_2;$\ 
 $Y:=F_3(X,U_1,U_3) =
X(1-U_1)(1-U_3)+(1-X)(1-U_1)U_3+U_1U_3$.
Thus, our  model determines the 
distribution 
$P(\bu)$, for $\bu=(u_1,u_2,u_3)$, 
and the 
values for the observed variables, 
as can be seen above. 

The unobserved random variable $U_1$ models all circumstances that lead to pneumonia
and $Z$ is a function~of~$U_1$ (which may be more complex in real scenarios).
Getting a treatment depends on having pneumonia but also on other circumstances,
like having similar symptoms due to other diseases, and this is modeled by 
$U_2$. So $X$ is a function of $Z$ and $U_2$.
Finally, mortality depends on all circumstances that lead to pneumonia,
getting the treatment, and
on further circumstances like having other diseases, which are modeled by $U_3$.
So $Y$ is a function of $U_1$, $X$, and $U_3$. We always assume that the 
dependency graph of the SCM is acyclic. This property is also called \emph{semi-Markovian}.

\vspacebetweenlayers
\noindent
\begin{minipage}[t]{0.76\textwidth}
{\bf Layer 1}  Empirical sciences rely heavily on the use of observed data,
which are typically represented as probability distributions over observed 
(measurable) variables. In our example, this is the distribution $P$
over $Z,X,$ and $Y$.  The remaining 
variables $U_1,U_2,U_3$, as well as the mechanism $\cF$, are of unobserved
nature. Thus, in our scenario, a researcher gets the probabilities (shown to the right)
 $P(z,x,y)=\sum_{\bu}\delta_{\cF,\bu}(z,x,y)\cdot P(\bu)$, where vectors
 $\bu=(u_1,u_2,u_3)\in \{0,1\}^3$~and
\end{minipage}\hspace*{2.5mm}
\begin{minipage}[t]{0.18\textwidth}
\footnotesize$${
    \begin{array}{c@{\hskip 1.2mm}c@{\hskip 1.2mm}c@{\hskip 1.2mm}|c}\vspace*{-8mm}\\
   Z& X&Y&\multicolumn{1}{c}{P(z,x,y)}\\ \hline \hline
   0&0&0&   0.09 \\
   0&0&1&    0.06 \\
   0&1&0&    0.36 \\
   0&1&1&    0.24 \\
   1&0&0&    0.15 \\ 
   1&0&1&    0.10 
   \end{array}
   }
$$
\end{minipage}\\[0.3mm]
 $\delta_{\cF,\bu}(z,x,y)=1$~if $\compactEquals{F_1(u_1)=z, F_2(z,u_2)=x,}$ 
 and 
 $\compactEquals{F_3(x,u_1,u_2)=y}$; otherwise
 $\delta_{\cF,\bu}(z,x,y)=0.$
The relevant query in our scenario $\pp{Y=1\mmid X=1}$ can be evaluated as
$\pp{Y=1\mmid X=1}=\pp{Y=1,X=1}/\pp{X=1}=0.24/0.6=0.4$  which says 
that the probability for recovery ($\compactEquals{Y=1}$) is only $40\%$ 
given that the patient took the drug ($\compactEquals{X=1}$).
On the other hand, the query for $\compactEquals{X=0}$ can be evaluated as
$\pp{Y=1\mmid X=0}=\pp{Y=1,X=0}/\pp{X=0}=0.16/0.4=0.4$ which 
may lead to the (wrong, see the next layer) opinion that the drug is irrelevant to recovery.

\vspacebetweenlayers
\noindent
\begin{minipage}[t]{0.4\textwidth}
{\bf Layer 2} Consider a randomized drug trial in which each patient receives treatment,
denoted as $\compactEquals{\dop(X=1)}$, regardless of pneumonia ($Z$) and other conditions ($U_2$). 
We model this by performing a hypothetical intervention in which we replace in $\cF$ 
the mechanism $F_2(Z,U_2)$ by the constant function $1$ and leaving the remaining 
\end{minipage}\hspace*{2mm}
\begin{minipage}[t]{0.52\textwidth}
\footnotesize$${ 
   \begin{array}{c@{\hskip 1.2mm}c@{\hskip 1.2mm}c@{\hskip 1.2mm}|c|c@{\hskip 1.2mm}c@{\hskip 1.2mm}c@{\hskip 1.2mm}}\vspace*{-8mm}\\
   U_1&U_2&U_3&\multicolumn{1}{c|}{P(\bu)}&Z&\compactEquals{X=1}&Y\\ \hline \hline
   0&0&0&    0.03  &1&1&1\\
   0&0&1&    0.02  &1&1&0\\
   0&1&0&    0.12  &1&1&1\\
   0&1&1&    0.08  &1&1&0\\
   1&0&0&    0.09  &0&1&0\\
   1&0&1&    0.06  &0&1&1\\
   1&1&0&    0.36  &0&1&0\\
   1&1&1&   0.24   &0&1&1
 
   \end{array}
   \hspace*{4mm}
   { 
   \begin{array}{c@{\hskip 1.2mm}c@{\hskip 1.2mm}|c}
   Z&Y&\multicolumn{1}{c}{\pp{[X=1]z,y}}\\ \hline \hline
   0&0&   0.45 \\
   0&1&   0.30 \\
   1&0&   0.10  \\
   1&1&   0.15
   \end{array}
   }
   }
$$
%
\end{minipage}\\[0.5mm]
functions unchanged.
If ~$\cF_{X=1} \ =\{\compactEquals{F'_1=F_1,F'_2=1,F'_3=F_3}\}$ 
denotes the new mechanism, then the \emph{post-interventional} 
distribution
$\pp{[X=1]Z,Y}$ is specified as 
$\pp{[X=1]z,y}=\sum_{\bu}\delta_{\cF_{X=1},\bu}(z,y)\cdot P(\bu),$ where 
$\delta_{\cF_{X=1},\bu}$ denotes function $\delta$ as 
above, but for the new 
mechanism $\cF_{X=1}$ (the distribution is shown on the right-hand 
side). 
A common and popular notation for the post-interventional probability 
is $\pp{Z,Y\mmid \compactEquals{\dop(X=1)}}$. In this paper,
we use the notation $\pp{[\compactEquals{X=1}]Z,Y}$ since
it is more convenient for analyses involving counterfactuals.
To determine the causal effect of the drug on recovery, we 
compute, in an analogous way, the distribution $\pp{[X=0]Z,Y}$ 
after  the intervention $\compactEquals{\dop(X=0)}$, which means that all patients receive placebo. 
Then, comparing the value $\pp{[X=1]Y=1}=0.45$ with $\pp{[X=0]Y=1}=0.40,$
we can conclude that $\pp{[X=1]Y=1} - \pp{[X=0]Y=1} >0$. 
This can be
interpreted as a positive  (average) effect of the drug in the population 
which is in opposite to what has been inferred using the purely probabilistic 
reasoning of Layer~1.
Note that it is not obvious how to compute 
the~post-interventional distributions 
from the observed probability $P(Z,X,Y)$; Indeed, this is 
a challenging task in the field of causality. 

\vspacebetweenlayers
\noindent
{\bf Layer 3} The key phenomena that can be modeled and analyzed at this level 
are counterfactual situations. Imagine, e.g., in our scenario there is a group of patients 
who did not receive the treatment and died $\compactEquals{(X = 0, Y = 0)}$.
One may ask, what would be the outcome $Y$ had they been given the treatment
$\compactEquals{(X = 1)}$. In particular,  one can ask what is the probability of recovery
if we had given the treatment to the patients of this group.
Using the formalism of  Layer~3, we can express this as 
a counterfactual query:
$\pp{[X=1]Y=1 \mmid X=0,Y=0}= \pp{[X=1](Y=1) \wedge (X=0,Y=0)} / \pp{X=0, Y=0}.$
Note that  the event $\compactEquals{[X=1](Y=1) \wedge (X=0,Y=0)}$ incorporates 
simultaneously two counterfactual mechanisms: $\cF_{X=1}$ and~$\cF$. 
This is the key difference to Layer 2, where we can only have one.
We define the probability in this situation as follows:
$$
	\mbox{
	$\pp{[X=x](Z=z,Y=y) \wedge (Z=z',X=x',Y=y')} =  \sum_{\bu}
	\delta_{\cF_{X=x},\bu}(z,y)\cdot \delta_{\cF,\bu}(z',x',y')\cdot P(\bu)$.
	}
$$
$\compactEquals{X=0,Y=0}$ is  satisfied
only for $(U_1,U_2,U_3) \in \{(0, 0, 0), (0, 1, 0), (1, 0, 0)\}$  (first table), and of them only $\{(0, 0, 0), (0, 1, 0)\}$ satisfies $\compactEquals{[X=1]Y=1}$ (third table). 
Thus, by marginalizing $Z$, we get
$\pp{[X=1]Y=1 \mmid X=0,Y=0}=0.15/0.24=0.625$
which may be interpreted that more than $62\%$ of patients 
who did not receive treatment and died would have survived
with treatment.
Finally, we would like to note that, in general, the 
events of Layer~3 can be quite involved and incorporate 
simultaneously many counterfactual worlds.

\vspacebetweenlayers
\noindent
{\bf Graph Structure of an SCM} Below we remind, how an SCM can be represented in the form of a Directed Acyclic Graph (DAG) and show such a DAG for the model discussed above.

Let $\fM=(\cF=\{F_1,\ldots,F_n\}, P, \bU, \bX=\{X_1,\ldots,X_n\})$ be an SCM.
We assume that  the model is \emph{Markovian}, i.e.~that the exogenous arguments $U_i, U_j$ of $F_i$, resp.~$F_j$ are independent whenever $i \not= j$. These exogenous arguments are not shown in the DAG.
We note that a general model as discussed above, called \emph{semi-Markovian}, which allows 
for the sharing of exogenous arguments and allows for arbitrary dependencies among 
the exogenous variables, can be reduced in a standard way to the Markovian model by 
introducing auxiliary ``unobserved'' variables.
Thus, in our example, to get a Markovian model,  we can assume, $\bX=\{ X, Y, Z, U_1\}$, where 
$X,Y,Z$ remain observed variables but $U_1$ is of unobserved nature.
 
We define that a DAG $\cG = (\bX, E)$ represents the graph structure of $\fM$ if, 
for every $X_j$ appearing as an argument of $F_i$, $X_j \to X_i$ is an edge in~$E$.
DAG $\cG$ is called the \emph{causal diagram} of the model $\fM$ \cite{Pearl2009,bareinboim2022pearl}.
The DAG for the discussed SCM therefore has the following form (here, as is usually done in Markovian models, variables $U_2$ and $U_3$ that only affect $X$, respectively $Y$, are omitted):
\begin{center}
\begin{tikzpicture}[xscale=1.5,yscale=1.5]
\tikzstyle{every node}=[inner sep=1pt,outer sep=1pt, minimum size=10pt,];
\tikzstyle{every edge}=[draw,->,thick]
\node (X) at (0,0) {$X$};
\node (Z) at (0,1) {$Z$};
\node (Y) at (1,0) {$Y$};
\node (U) at (1,1) {$U_1$};
\draw (X) edge (Y); 
\draw (Z) edge (X);
\draw (U) edge (Y);
\draw (U) edge (Z);
\end{tikzpicture}
\end{center}
Moreover, the DAG of the intervention model discussed in subsection Layer~2, 
with functional mechanism $\cF_{X=1}$ 
has the following form, meaning that all in-going edges to $X$ are removed from the pre-interventional model:
\begin{center}
\begin{tikzpicture}[xscale=1.5,yscale=1.5]
\tikzstyle{every node}=[inner sep=1pt,outer sep=1pt, minimum size=10pt,];
\tikzstyle{every edge}=[draw,->,thick]
\node (X) at (0,0) {$X$};
\node (Z) at (0,1) {$Z$};
\node (Y) at (1,0) {$Y$};
\node (U) at (1,1) {$U_1$};
\draw (X) edge (Y); 
\draw (U) edge (Y);
\draw (U) edge (Z);
\end{tikzpicture}
\end{center}

\newpage
\section{Syntax and Semantics of the Languages of PCH: Formal Definitions}
\label{sec:appendix:formal:definitions:syntax:and:semantics}
We always consider discrete distributions 
in the probabilistic and causal languages studied in this paper. We
represent the values  of the random variables as $\mathit{Val} = \{0,1,\myldots, \maxvaluecount - 1\}$ 
and denote by $\bX$ the set of random variables used in a system.
By capital letters $X_1,X_2, \myldots$, we denote the individual variables 
and assume, w.l.o.g.,~that they all share the same domain  $\mathit{Val}$.
A value of $X_i$ is often denoted by $x_i$ or a natural number.
In this section, we describe syntax and semantics of the languages starting 
with probabilistic ones and then we provide extensions to the causal systems.


By an \emph{atomic} event, we mean an event of the form $X=x$, where 
$X$ is a random variable and $x$ is a value in the domain of $X$. 
The language $\Eprop$ of propositional formulas over atomic events is defined 
as the closure of such events under the Boolean operators $\wedge$ and $\neg$. 
To specify the syntax of interventional and counterfactual events we 
define the intervention and extend the syntax of $\Eprop$ to $\Epint$ and $\Ecounter$,
respectively,
using the following grammars:
\[
\begin{array}{rccl}
 \mbox{$\Eprop$ is defined by} 	& \bp 	& ::= & X = x  \mid \neg \bp \mid \bp \wedge \bp
 	 \\[1mm]
\mbox{$\Eint$ is defined by}    	& \bi		& ::= & \top \mid  X = x       \mid  \bi \wedge \bi 
	 \\[1mm]
\mbox{$\Epint$ is defined by} 	&\bp_{\bi} 	& ::= & [\, \bi\, ]\, \bp \\[1mm]
  \mbox{$\Ecounter$ is defined by}	& \bc 	& ::= & \bp_{\bi}  \mid \neg \bc \mid \bc \wedge \bc.
\end{array}
\]
Note that since $\top$ means that no intervention has been applied,
 we can assume 
 that $\Eprop \subseteq \Epint$.
 
The PCH 
consists of three languages 
$\cL_1, \cL_2,\cL_3$, each of which is based on terms of the form $\PP{\delta}$.
For the (observational or associational) language $\cL_1$, we have  $\delta\in \Eprop$,
for the (interventional) language $\cL_2$,  we have $\delta\in \Epint$ and 
for the (counterfactual) language $\cL_3$, $\delta\in \Ecounter$. 
The expressive power and computational complexity properties of the languages depend 
largely on the operations that we are allowed to apply to the basic terms.
Allowing gradually more complex operators, we describe 
the languages which are the subject of our studies below. We start with the description of
the languages $\cT_i^*$ of terms, with $i=1,2,3$, using the following grammars\footnote{In the given grammars we omit the brackets for readability, but we assume that they can be used in a standard way.}
\[
\begin{array}{ll c|c ll}
    \multirow{1.5}{*}{$\Ticomp$}  & \multirow{1.5}{*}{$\bt::=\PP{\delta_i}$}   &\multirow{1.5}{*}{}&\multirow{1.5}{*}{}&  
        \multirow{1.5}{*}{$\Ticompsum$}  & \multirow{1.5}{*}{$\bt::=\PP{\delta_i}   \mid  \mbox{$\sum_{x} \bt$} $} \\[1.6mm]
    \Tilin  & \bt::=\PP{\delta_i}    \mid \bt +\bt       &&&
    	 \Tilinsum  & \bt::=\PP{\delta_i}   \mid \bt +\bt  \mid  \mbox{$\sum_{x} \bt$} \\[1mm]
   \Tipoly & \bt::=\PP{\delta_i} \mid \bt+\bt  \mid -\bt \mid \bt \cdot \bt &&&
   \Tipolysum &\bt::= \PP{\delta_i} \mid \bt + \bt  \mid -\bt \mid \bt \cdot \bt \mid 
 \mbox{$\sum_{x} \bt$} 
 
   \end{array}
\]
where $\delta_1$ are formulas in $\Eprop$, $\delta_2\in\Epint$, 
$\delta_3\in \Ecounter$.

The probabilities of the form $\PP{\delta_i}$ 
are called \emph{primitives} or \emph{basic terms}.
In the summation operator $\sum_{x}$, we have 
a dummy variable $x$ 
which ranges over all values $0,1,\ldots, \maxvaluecount - 1$.
The summation $\sum_{x} \bt$ is a purely syntactical 
concept which represents the sum 
$\bt[\sfrac{0}{ x}]  +\bt[\sfrac{1}{x}]+\myldots +\bt[\sfrac{\maxvaluecount - 1}{x}]$,
where by $\bt[\sfrac{v}{x}]$, we mean the expression in which all occurrences of $x$
are replaced with value $v$.
For example,  for  $\mathit{Val} = \{0,1\}$,
the expression
 $\sum_{x} \PP{Y=1, X=x}$
 semantically represents $\PP{Y=1, X=0} + \PP{Y=1, X=1}$.
%
We note that the dummy variable $x$ is not a (random) variable in the usual sense
and that its scope is defined in the standard way.


In the table above, the terms in $\Ticomp$ are just basic probabilities with the events given by the corresponding languages $\Eprop$, $\Epint$, or $\Ecounter$. 
Next, we extend terms by being able to compute sums of probabilities and by 
adding the same term several times, we also allow for weighted sums with 
weights given in unary. Note that this is enough to state all our hardness
results. All matching upper bounds 
also work when we allow for explicit weights given in binary.
In the case of $\Tipoly$, we are allowed to build polynomial terms in the 
primitives. On the right-hand side of the table, we have the same
three kinds of terms, but to each of them, we add a marginalization operator
as a building block.

The polynomial calculus $\Tipoly$ was originally introduced by Fagin, Halpern, and Megiddo \citep{fagin1990logic}
(for $i = 1$) 
to be able to express conditional probabilities by clearing denominators. While this works 
for $\Tipoly$, this does not work in the case of $\Tipolysum$,
since clearing denominators with exponential sums creates expressions that
are too large. But we could introduce basic terms 
of the form $\PP{\delta_i \mmid \delta}$ with $\delta \in \Eprop$
explicitly. All our hardness proofs work without conditional probabilities
but all our matching upper bounds are still true with explicit
conditional probabilities.
%
Expression as
$\PP{X=1} + \PP{Y=2} \cdot \PP{Y=3}$
is  a valid term in $\Tprobpoly$ and 
$\sum_z\PP{[X=0](Y=1, Z=z)}$ and $\sum_z\PP{([X=0]Y=1), Z=z}$ are
valid terms in the language  
$\Tcausalpolysum$, for example.  


Now, let 
$
\Lab= \{
\text{base}, \text{base}\langle{\Sigma}\rangle, 
\text{lin}, \text{lin}\langle{\Sigma}\rangle,     
\text{poly}, \text{poly}\langle{\Sigma}\rangle
\}
$
denote the labels of all variants of languages. Then for each   $*\in\Lab$ and $i=1,2,3$ we define
the languages $\Listar$ of Boolean combinations of inequalities in a standard way: 
\begin{align*}
   & \mbox{$\Listar$ is defined by}\quad  \bff ::= \bt \le \bt' \mid \neg \bff \mid \bff \wedge \bff , \quad
    \mbox{where $ \bt,\bt'$ are terms in  $ {\cal T}^{*}_{i} $.}
\end{align*}
Although the language and its operations can appear rather restricted, all the usual elements of probabilistic and causal formulas can be encoded. Namely, equality is encoded as greater-or-equal in both directions, e.g. 
$\PP{x} = \PP{y}$ means $\PP{x} \geq \PP{y} \wedge \PP{y} \geq \PP{x}$.
The number~$0$ can be encoded as an inconsistent probability, 
i.e., $\PP{X=1 \wedge X=2}$. 
In a language allowing addition and multiplication, any positive integer can be easily encoded
from the fact $\PP{\top} \equiv 1$, e.g. $4 \equiv (1 + 1) (1 + 1) \equiv (\PP{\top} + \PP{\top}) (\PP{\top} + \PP{\top})$.
If a language does not allow multiplication, one can show that the encoding is still possible.
Note that these encodings barely change the size of the expressions, so allowing or disallowing these additional operators does not affect any complexity results involving these expressions.

To define the semantics of the languages,  we use  a structural causal model (SCM) as in \cite[Sec.~3.2]{Pearl2009}.
An SCM 
is a tuple $\fM=(\cF, P, $ $\bU, \bX)$, such that $\bV = \bU \cup \bX$ is a set of 
variables partitioned into 
exogenous (unobserved) variables $\bU=\{U_1,U_2,\myldots \}$ and 
endogenous variables $\bX$.
The tuple $\cF=\{F_1,\myldots,F_n\}$ consists of
functions such that function $F_i$ calculates the value of variable $X_i$ from the values 
$(\bx, \bu)$ of other variables in $\bV$ as  
$F_i(\pa_i,\bu_i)$ \footnote{We consider recursive models, 
that is, we assume the endogenous variables 
are ordered such that variable $X_i$ (i.e. function $F_i$) is not affected by any 
 $X_j$ with $j > i$.},
 where $\Pa_i\subseteq \bX$ and $\bU_i\subseteq \bU$. 
$P$ specifies a probability distribution 
of all exogenous  variables $\bU$. Since variables $\bX$  depend deterministically on 
 the exogenous variables via functions $F_i$,
 $\cF$ and $P$ obviously define 
the joint probability distribution of 
$\bX$.
 Throughout this paper, we assume that domains of endogenous variables $\bX$ are 
 discrete and finite. In this setting, exogenous variables $\bU$ could take values 
 in any  domains, including infinite and continuous ones.
 A recent paper \citep{zhang2022partial} shows, however,
 that any SCM over discrete endogenous variables is equivalent 
 for evaluating post-interventional probabilities to an SCM where all exogenous variables are discrete with finite
 domains.
As a consequence, throughout this paper, we assume that domains of 
exogenous variables $\bU$ are discrete and finite, too.

For any basic  $\Eint$-formula $\let\IsInPP=1 X_i=x_i$ 
(which, in our notation, means $\let\IsInPP=1 \dop(X_i=x_i)$),  we denote 
by $\cF_{X_i=x_i}$ the function obtained from $\cF$ by replacing $F_i$
with the constant function $F_i(\bv):=x_i$. 
We generalize this definition for any interventions specified by
$\alpha\in \Eint$ in a natural way and denote as 
$\cF_{\alpha}$ the resulting functions.
For any $\varphi\in \Eprop$,  we write $\cF, \bu \models \varphi$
if $\varphi$ is satisfied for values of $\bX$ calculated from the values $\bu$.
For $\alpha\in \Eint$, we write $\cF, \bu \models [\alpha]\varphi$ if  
$\cF_{\alpha}, \bu \models \varphi$. And for all $\psi,\psi_1,\psi_2\in \Ecounter$,
we write  $(i)$ $\cF, \bu \models \neg\psi$ if  $\cF, \bu \not\models \psi$ and 
$(ii)$ $\cF, \bu \models \psi_1 \wedge \psi_2$ if  $\cF, \bu \models \psi_1$
and $\cF, \bu \models \psi_2$.
Finally, for $\psi\in  \Ecounter$, let $S_{\fM}=\{\bu \mid \cF, \bu \models \psi\}$.
We define $\llbracket \be \rrbracket_{\fM}$, for some expression $\be$,
recursively in a natural way, 
starting with basic terms as follows 
$\llbracket \PP{\psi} \rrbracket_{\fM} = \sum_{\bu\in S_{\fM}(\psi)}P(\bu)$
and, for $\delta\in\Eprop$, $\llbracket \PP{\psi\mmid \delta} \rrbracket_{\fM} = \llbracket\PP{\psi \wedge \delta} \rrbracket_{\fM}/ \llbracket\PP{\delta} \rrbracket_{\fM}$, assuming that the expression is undefined if  $\llbracket\PP{\delta} \rrbracket_{\fM}=0$.
For two expressions $\be_1$ and $\be_2$, we define 
$ \fM \models  \be_1 \le  \be_2$,  if and only if, 
$\llbracket \be_1 \rrbracket_{\fM}\le \llbracket \be_2 \rrbracket_{\fM}.$
The semantics for negation and conjunction are defined in the usual way,
giving the semantics for $\fM \models \varphi$ for any formula $\varphi$
in $\Lcausalstar$.

\end{document}